\definecolor{orange}{rgb}{1,0.4,0.0}
\DeclarePairedDelimiterXPP{\KL}[2]{D_{\textnormal{KL}}}{(}{)}{}{%
#1\:\delimsize\|\:#2%
}
\DeclarePairedDelimiterXPP{\RD}[2]{D_{$\alpha$}}{(}{)}{}{%
#1\:\delimsize\|\:#2%
}
\DeclarePairedDelimiterXPP\Prob[1]{\mathbb{P}}{\lbrace}{\rbrace}{}{

#1}
\DeclarePairedDelimiterXPP{\lnorm}[2]{}{\lVert}{\rVert}{_{#2}}{#1}
\newcommand{\bE}{\ensuremath{\mathbb{E}}}
\newcommand{\bP}{\ensuremath{\mathbb{P}}}
\newcommand{\bQ}{\ensuremath{\mathbb{Q}}}
\newcommand{\bR}{\ensuremath{\mathbb{R}}}
\newcommand{\cA}{\ensuremath{\mathcal{A}}}
\newcommand{\cH}{\ensuremath{\mathcal{H}}}
\newcommand{\cN}{\ensuremath{\mathcal{N}}}
\newcommand{\cO}{\ensuremath{\mathcal{O}}}
\newcommand{\mi}{\textup{I}}
\newcommand{\ent}{\textup{H}}
\newcommand{\REG}{\ensuremath{\textnormal{REG}}}
\newcommand{\innerproduct}[2]{\langle #1, #2 \rangle}
\newcommand{\kl}{\textup{D}_{\textnormal{KL}}}
\newtheorem{lemma}{Lemma}
\newtheorem{definition}{Definition}
\newtheorem{theorem}{Theorem}
\newtheorem{corollary}{Corollary}
\newtheorem{assumption}{Assumption}
\newtheorem{proof}{Proof}
\newcommand\blfootnote[1]{%
  \begingroup
  \renewcommand\thefootnote{}\footnote{#1}%
  \addtocounter{footnote}{-1}%
  \endgroup
}
\title{\textsc{Chained Information-Theoretic bounds and Tight Regret Rate for Linear Bandit Problems}}
\author{%
Amaury Gouverneur, Borja Rodríguez-Gálvez, Tobias J. Oechtering, and Mikael Skoglund \\
  KTH Royal Institute of Technology \\
  \texttt{\{amauryg, borjarg, oech, skoglund\}@kth.se} \\
}
\date{}
\begin{document}

\maketitle

\begin{abstract}%
  This paper studies the Bayesian regret of a variant of the Thompson-Sampling algorithm for bandit problems. It builds upon the information-theoretic framework of~\cite{russo_information-theoretic_2015} and, more specifically, on the rate-distortion analysis from~\cite{dong_information-theoretic_2020}, where they proved a bound with regret rate  of $O(d\sqrt{T \log(T)})$ for the $d$-dimensional linear bandit setting. We focus on bandit problems with a metric action space and, using a chaining argument, we establish new bounds that depend on the metric entropy of the action space for a variant of Thompson-Sampling. 
  Under suitable continuity assumption of the rewards, our bound offers a tight rate of $O(d\sqrt{T})$ for $d$-dimensional linear bandit problems.  
\blfootnote{This work was partially supported by (i) the Wallenberg AI, Autonomous Systems and Software Program (WASP) funded by the Knut and Alice Wallenberg Foundation and (ii) the Swedish Research Council under contract 2019-03606.}
\end{abstract}

\section{Introduction}
Bandit problems are a class of decision problems in which an agent interacts sequentially with an unknown environment by choosing actions and earning rewards in return. The goal of the agent is to maximize its expected cumulative reward, which is the expected sum of rewards that it will earn throughout its interaction with the environment. This necessitates a delicate balance between the exploration of different actions to gather information for potential future rewards, and the exploitation of known actions to receive immediate gains. The theoretical study of the performance of an algorithm in a bandit problem is done by analyzing the \emph{expected regret}, which is defined as the difference between the cumulative reward of the algorithm and the hypothetical cumulative reward that an oracle would obtain by choosing the optimal action at each time step. An effective method for achieving small regret is the Thomson Sampling (TS) algorithm \citep{thompson1933likelihood}, which, despite its simplicity, has shown remarkable performance \citep{russo2018tutorial,russo_learning_2017,chapelle2011empirical}.\\
 
Studying the Thomspon Sampling regret, \cite{russo_information-theoretic_2015}  introduced the concept of information ratio, a statistic that captures the trade-off between the information gained by the algorithm about the environment and the immediate regret. They used this concept to provide a general upper bound for finite action spaces $\cA$ that depends on the entropy of the optimal action $\ent(A^\star)$, the time horizon $T$ (the total number of times that the agent interacts with the environment), and a problem-dependent upper bound on the information-ratio $\Gamma$, namely $\sqrt{\Gamma\cdot T\cdot \ent(A^\star)}$. For finite environment parameter spaces, under a Lipschitz continuity assumption of the expected reward and using Lipschitz maximal inequality argument, \citet{dong_information-theoretic_2020} were able to control the regret of the TS algorithm via a "compressed statistic" $\Theta_\varepsilon$ of the environment paramers $\Theta$, with a bound of the form $\varepsilon\cdot T + \sqrt{\Gamma\cdot T\cdot \ent(\Theta_\varepsilon)}$. In particular, they derived a near optimal regret rate of $O(d \sqrt{T \log T})$ for $d$-dimensional linear bandit problems.\\

In this paper, building on the work of \cite{dong_information-theoretic_2020}, we explored the use of the chaining technique for bandit problems where the rewards exhibit some subgaussian continuity property with respect to the action space. We introduced the \emph{Two Steps Thompson Sampling} (2-TS), a variant of the original algorithm where the history is updated every time steps. For this algorithm, we derive a bound that captures the continuity property of the reward process and depends on the metric entropy of the action space. Notably our bound does not require finite environment or action space and holds for continuous action spaces. For the class of linear bandit problems, we obtained a bound in $O(d\sqrt{T})$ matching the best possible regret $\Omega( d \sqrt{T} )$ from \cite{dani_stochastic_2008}.\\

The rest of the paper is organized as follows. Section~\ref{sec:problem_setup} presents the bandit problem setup, gives a definition of the Bayesian expected regret, introduces Two Steps Thompson Sampling algorithm and the specific notations. Section~\ref{sec:chaining_technique}
explains the idea of the bounding technique, defines the required tools and assumptions we will be using. Section~\ref{sec:main_results} states and proves our main theorem. Section~\ref{sec:application} applies our theorem to the important case of linear bandit problems and derives several specific bounds before giving an deriving a bound for linear bandit problems with a ball structured action space. Finally, Section~\ref{sec:conclusion} discusses our results and possible extensions and future work.

\section{Problem setup}
\label{sec:problem_setup}
We consider a sequential decision problem, where at each time step (or round) $t\in \{1,\ldots,T\}$, an agent interacts with an environment by selecting an action $A_t$ from a action set $\cA$ and, based on that action, receives a real valued reward $R_t\in \mathbb{R}$. The pair of the selected action and the received reward is collected in a history $H^{t+1} = H^t \cup H_{t+1}$, where $H_{t+1}=\{A_t,R_t\}$, that will be accessible to the agent in the next round. The procedure repeats until the last round $t=T$.  \\

Following the Bayesian framework, we consider the environment to be characterized by some parameters $\theta \in \cO$, unknown to the agent, that are sampled from a known prior distribution $\bP_{\Theta}$. This prior, together with the reward distribution $\bP_{R|A,\Theta}$,fully describes the bandit problem. As the reward distribution depends on the selected action and the environment parameters, it may be written as $R_t = R(A_t,\Theta)$ for some possibly random function $R:\cA\times\cO\to\bR$.\\

The goal of the agent is to take a sequence of actions that maximizes the total collected reward. More specifically, the agent seeks to learn a policy $\varphi=\{\varphi_t:\cH^t\to\cA\}_{t=1}^T$ that, for each time $t \in \{1,\ldots,T \}$, selects an action $A_t$ based on the history $H^t$ such that it maximizes the \emph{expected cumulative reward} $R_T(\varphi) \coloneqq \bE \left[\sum_{t=1}^T R(\varphi_t(H^t),\Theta)\right]$.\\

\subsection{The Bayesian expected regret}

The Bayesian expected regret quantifies the difference between the expected cumulative reward achieved by the agent following a policy $\varphi$ and the optimal expected cumulative reward that could be obtained by an \emph{omniscient} agent having access to the true reward function and selecting the action yielding the highest expected reward. 
\begin{definition}[Optimal cumulative reward]
    \label{def:optimal_cumulative_reward}
    The \emph{optimal cumulative reward} of a bandit problem is defined as
    \begin{equation*}
        R^\star_T \coloneqq \sup_{\psi} \bE \bigg[ \sum_{t=1}^T R( \psi(\Theta), \Theta) \bigg],
    \end{equation*}
    where the supremum is taken over all decision rules $\psi: \cO \to \cA$ such that the expectation above is defined.
\end{definition}

We denote a policy that achieves the supremum of~\Cref{def:optimal_cumulative_reward} as $\psi^\star$ and we refer to the action it selects as the \emph{optimal action}  
$A^\star \coloneqq \psi^\star(\Theta)$. We make the following technical assumption on the action set to ensure that such a policy exists. 

\begin{assumption}[Compact action set]
    \label{ass:compact_action_set}
    The set of actions $\cA$ is compact. 
\end{assumption}

The difference between the optimal cumulative reward and the expected cumulative reward of a policy $\varphi$ is called the Bayesian expected regret of $\varphi$, denoted $\REG_T(\varphi)$.

\begin{definition}[Bayesian expected regret]
\label{def:bayesian_expected_regret}
    The \emph{Bayesian expected regret} of a policy $\varphi$ in a bandit problem is defined as
    \begin{equation*}
        \REG_T(\varphi) \coloneqq R^\star_T - R_T(\varphi).
    \end{equation*}
\end{definition}

\subsection{Thompson Sampling algorithm and the Two Steps variant}
\label{subsec:thompson_sampling}

One of the most popular and most studied algorithm for solving bandit problems is the \emph{Thompson Sampling} (TS) algorithm \textcolor{blue}{\cite{russo2018tutorial,russo_learning_2017,chapelle2011empirical,dong_information-theoretic_2020}}. TS works by sampling a Bayesian estimate of the environment parameters from the posterior distribution, and taking the optimal action for the sampled estimate. Specifically, at each time step $t \in \{ 1,\ldots,T \}$, the agent draws a Bayesian estimate $\hat{\Theta}_t$ based on the past collected history $H^t$, takes the corresponding optimal action $\hat{A_t} = \psi^\star(\hat{\Theta}_t)$, receives a reward $R_t$, and updates the history $H^{t+1} = \{H^t,\hat{A}_t,R_t\}$.

In this work, we consider a variation of TS, that we refer to as \emph{Two Steps Thompson Sampling} (2-TS). The key difference between this algorithm and the TS algorithm is that the history is updated every two time steps\footnote{We implicitly assume that, for Two Steps Thompson Sampling, the total number of steps $T$ is an even number.}. Intuitively, the algorithm will behave the same, but will wait to have collected two rewards before updating its history, thus taking slightly less informed actions half of the time. This difference will be important to implement the chaining technique later on. The pseudocode for Two Steps Thompson Sampling is given in~\Cref{alg:Two_Steps_Thompson_Sampling}. 
\begin{algorithm}[ht]
    \caption{Two Steps Thompson Sampling algorithm}
    \label{alg:Two_Steps_Thompson_Sampling}
    \begin{algorithmic}[1]
        \STATE {\bfseries Input:} environment parameters prior $\bP_{\Theta}$.
        \FOR{$t=1$ {\bfseries to} T}
            \STATE Sample a parameter estimation $\smash{\hat{\Theta}_t \sim \bP_{\Theta|H^t}}$.
            \STATE Take the corresponding optimal action $\hat{A}_t = \psi^\star(\hat{\Theta}_t)$.
            \STATE Collect the reward $R_t = R(\hat{A}_t, \Theta)$.
            \IF{$t$ is even}
            \STATE Update the history $H^{t+1}=\{H^t,\hat{A}_t,R_t,\hat{A}_{t-1},R_{t-1}\}$.
            \ELSE
            \STATE Keep the history $H^{t+1}=H^t$.
            \ENDIF
        \ENDFOR
    \end{algorithmic}
\end{algorithm}

\subsection{Notation specific to bandit problems}
\label{subsec:notations}

Since the $\sigma$-algebras of the history $H^t$ are often used in the conditioning of the expectations and probabilities coming up in the analysis, similarly to~\cite{russo_information-theoretic_2015,dong_information-theoretic_2020, neu_lifting_2022, gouverneur_thompson_2023}, we define the operators $\bE_t[\cdot] \coloneqq \bE[\cdot|H^t]$ and $\bP_t[\cdot] \coloneqq \bP[\cdot|H^t]$, whose outcomes are $\sigma(\cH^{t})$-measurable random variables and $\cH = \cA \times \bR$.

Analogously, we define $\mi_t(A^\star; R_t) \coloneqq \bE_t[ \kl( \bP_{R_t | H^t, A^\star} \lVert \bP_{R_t | H^t} )]$ as the \emph{disintegrated} conditional mutual information between the optimal action $A^\star$ and the reward $R_t$, \emph{given the history $H^t$}, see~\cite[Definition 1.1]{negrea_information-theoretic_2020}, which is itself also a $\sigma(\cH^t)$-measurable random variable.

When it is clear from the context that the random rewards depend on the environment parameters $\Theta$, we will often use the notation $R(A_t)$ as a shorthand for $R(A_t,\Theta)$ to simplify the expressions.

\section{Chain-link Information Ratio and Chaining Technique}
\label{sec:chaining_technique}

In bandit problems where the rewards of nearby actions exhibit some continuity property, we aim to exploit this dependence using a chaining argument. More specifically, our idea is to approach the \emph{Two Step Thompson Sampling} algorithm by a chain of increasingly accurate approximations, which we refer to as \emph{``approximate learning"}. \\ 

Inspired by~\cite{dong_information-theoretic_2020}, our construction relies on the existence of a sequence of finer and finer quantizations $\{A^\star_k\}_{k=k_0}^\infty$ of the optimal action $A^\star$ and a corresponding carefully crafted action sampling function $f_t^k: \cA \to \cA$ for each round $t \in \{ 1, \ldots, T \}$. This quantization and sampling functions are designed to satisfy the following three requirements simultaneously:

\begin{enumerate}[label=(\roman*)]
    \item \label{requirement:less_informative} The quantizations $A^\star_k$ are less informative than $A^\star$, that is, $\ent(A^\star_k) \leq \ent(A^\star)$ for all $k \geq k_0$.
    \item \label{requirement:chaining_sum}At each round $t\in \{1,\ldots,T\}$, the \emph{Two Step Thompson Sampling} regret can be written as an infinite sum of the difference between the approximate learning regrets:
    \begin{align*}
        \bE_t &\Big[R(A^\star) - R(\hat{A}_t)\Big] = \\ &\sum_{k=k_0+1}^{\infty} \bE_t \Big[\Big(R(f_t^k(A^\star_k)) - R(f_t^k(\hat{A}_{t,k}))  \Big)
        - \Big(R(f_t^{k-1}(A^\star_{k-1}))-R(f_t^{k-1}(\hat{A}_{t,k-1})) \Big)\Big].
    \end{align*}
    \item \label{requirement:no_more_information}For each time step $t\in\{1,\ldots,T\}$, and for every $k > k_0$, the regret difference between the $k^{\textnormal{th}}$-consecutive \emph{``approximate learning"} can be bounded using the information gained about the quantization $A^\star_k$ while, at the same time, it reveals no more information about the quantization $A^\star_k$ than Two Step Thompson Sampling. 
\end{enumerate}

\subsection{Nets and quantizations}

When designing the quantization $A^\star_k\in \cA_k$ of the optimal action, we face two conflicting goals: on the one hand, we want the quantization to be little informative about $A^\star$ while, on the other hand, we want to ensure that $\cA_k$ converges quickly to a good approximation of $\cA$. This dual objective naturally leads to considering $\varepsilon$-nets.

\begin{definition}[$\varepsilon$-net and covering number]
A set $\cN$ is called an \emph{$\varepsilon$-net} for $(\cA,\rho)$ if, for every $a \in \cA$, there is a $\pi(a) \in \cN$ such that $\rho(a,\pi(a)) \leq \varepsilon$. The smallest cardinality of an $\varepsilon$-net for $(\cA,\rho)$ is called the \emph{covering number}, that is
\begin{equation*}
    \cN(\cA,\rho,\varepsilon) \triangleq \inf \big \lbrace |\cN| : \textnormal{ $\cN$ is an $\varepsilon$-net of $(\cA,\rho)$} \big \rbrace.
\end{equation*}
\end{definition}

The covering number $\cN(\cA,\rho,\varepsilon)$ can be understood as a measure of the complexity of the action set $\cA$ at the resolution $\varepsilon$.
Equipped with this new concept, a possible $k^{\textnormal{th}}$-quantization $A^\star_k$ is the quantization of the optimal action $A^\star$ at the scale $2^{-k}$.

\begin{definition}[$k^{\textnormal{th}}$-quantization]
\label{def:kth_quantization}
    Let $\cA_k$ be a $2^{-k}$-net for $(\cA,\rho)$ with an associated mapping $\pi_k : \cA \to \cA_k$, such that the mappings $\pi_k$ are restricted to those of the form $\pi_k = \pi_k' \circ \pi_{k+1}$, where $\pi_k': \cA_{k+1} \to \cA_k$. We define $A^\star_k = \pi_k(A^\star)$ as the $k^{\textnormal{th}}$-quantization of the optimal action $A^\star$ with respect to $(\cA,\rho)$. Similarly, the quantization $\hat{A}_{t,k} = \pi_k(\hat{A}_t)$ is the $k^{\textnormal{th}}$-quantization of the sampled action $\hat{A}_t$.
\end{definition}
Note that $A^\star_{k}$ is completely determined by $A^\star_{k+1}$ via the mapping $\pi_{k}':\cA_{k+1} \to \cA_{k}$.
In the following, we set $k_0$ to be the largest integer such that $2^{-k_0} \geq \textnormal{diam}(\cA)$. 
\subsection{Existence of the \emph{``approximate learning"}}

The sequence of quantizations $\{A^\star_k\}_{k=k_0}^\infty$ given in Definition~\ref{def:kth_quantization} satisfy Requirement~\ref{requirement:less_informative} since there is a deterministic mapping between $A^\star$ and $A^\star_k$~\citep[Theorem 1.4 (f)]{yury_polyanskiy_information_2022}.  We claim that for each time step $t \in \{1,\ldots,T\}$, and for each $k > k_0$, there exists a random function $f_t^k: \cA_k \to \cA_k$ that satisfies Requirements \ref{requirement:chaining_sum} and \ref{requirement:no_more_information}.

\begin{restatable}{proposition}{ExistenceApproximateLearning}
\label{prop:existence_learning}
Let $\{A^\star_k\}_{k=k_0}^{\infty}$ be defined as in Definition \ref{def:kth_quantization}. For each time step $t \in \{1,\ldots,T\}$, there exists a sequence of random functions $\{f_t^k\}_{k=k_0}^{\infty}$ that for each $k > k_0$, satisfies the following:
\begin{enumerate}[label=(\roman*)]
\item \label{prop:subprop_singleton}$\bE_t \Big[R(f_t^{k_0}(A^\star_{k_0})) - R(f_t^{k_0}(\hat{A}_{t,k_0}))\Big] = 0,$
\item \label{prop:subprop_limit}$\lim_{k\to \infty} \bE_t \Big[R(f_t^k(A^\star_k)) - R(f_t^k(\hat{A}_{t,k}))\Big] = \bE_t \Big[R(A^\star) - R(\hat{A}_t)\Big],$ \textnormal{ and}
\item \label{prop:subprop_no_more_information}$\mi_{t} \big(A^\star_k;  R(f_t^k(\hat{A}_{t,k})),R(f_t^{k-1}(\hat{A}_{t,k-1})) \leq \mi_{t} (A^\star_k; R(\hat{A}_t),R(\hat{A}_{t}')\big)$, a.s. 
\end{enumerate}
where in \ref{prop:subprop_no_more_information} the sampled actions $\hat{A}_t$ and $\hat{A}_t'$ are identically distributed.
\end{restatable}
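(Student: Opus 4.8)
The three requirements are of rather different natures, so the plan is to dispatch them separately and treat \ref{prop:subprop_no_more_information} as the crux. I begin with the base case \ref{prop:subprop_singleton}, which I expect to be essentially free. Since $k_0$ is chosen so that $2^{-k_0} \geq \textnormal{diam}(\cA)$ and $\cA$ is compact (\Cref{ass:compact_action_set}), a single point covers $\cA$ at scale $2^{-k_0}$, so I may take $\cA_{k_0}=\{a_0\}$ to be a singleton. Then $A^\star_{k_0}=\pi_{k_0}(A^\star)=a_0=\pi_{k_0}(\hat{A}_t)=\hat{A}_{t,k_0}$ deterministically, and for \emph{any} choice of $f_t^{k_0}$ the two arguments in \ref{prop:subprop_singleton} coincide, so $\bE_t[R(f_t^{k_0}(A^\star_{k_0}))-R(f_t^{k_0}(\hat{A}_{t,k_0}))]=0$. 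Thus \ref{prop:subprop_singleton} imposes no real constraint.

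Next I construct the family $\{f_t^k\}$ using the defining property of Thompson Sampling: conditioned on $H^t$, the posterior sample $\hat{A}_t=\psi^\star(\hat{\Theta}_t)$ and the optimal action $A^\star=\psi^\star(\Theta)$ are identically distributed, and within a single $2$-TS block the history is frozen so that the two draws $\hat{A}_t,\hat{A}_t'$ are conditionally i.i.d.\ given $H^t$ (moreover $\hat{A}_t$ is conditionally independent of $\Theta$ given $H^t$). I will define $f_t^k$ as a randomized refinement map: given a cell representative $a\in\cA_k$, $f_t^k(a)$ returns a point drawn from the conditional law of $A^\star$ restricted to the cell $\pi_k^{-1}(a)$, with the draws coupled across scales so that they are nested (consistent with $\pi_{k-1}=\pi_{k-1}'\circ\pi_k$) and so that $f_t^k(A^\star_k)\to A^\star$ and $f_t^k(\hat{A}_{t,k})\to\hat{A}_t$ as $k\to\infty$. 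The key design choice, made with \ref{prop:subprop_no_more_information} in mind, is to realize the two consecutive approximate observations at levels $k$ and $k-1$ using the two independent samples $\hat{A}_t,\hat{A}_t'$, so that the pair $(f_t^k(\hat{A}_{t,k}),f_t^{k-1}(\hat{A}_{t,k-1}))$ is a randomized function of $(\hat{A}_t,\hat{A}_t')$ alone.

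Property \ref{prop:subprop_limit} should then follow from continuity and convergence of the nets. Since $A^\star_k=\pi_k(A^\star)$ lies in a $2^{-k}$-net, $\rho(A^\star_k,A^\star)\leq 2^{-k}\to 0$, and the refinement keeps $f_t^k(A^\star_k)$ inside the same shrinking cell, so $\rho(f_t^k(A^\star_k),A^\star)\to 0$; the reward-continuity assumption then gives $R(f_t^k(A^\star_k))\to R(A^\star)$, and likewise $R(f_t^k(\hat{A}_{t,k}))\to R(\hat{A}_t)$. Compactness of $\cA$ together with the subgaussian continuity bound supplies the integrable envelope needed to exchange limit and $\bE_t$ by dominated convergence, yielding \ref{prop:subprop_limit}.

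The genuine work is in \ref{prop:subprop_no_more_information}, which I plan to obtain from the data-processing inequality for the disintegrated mutual information $\mi_t$. Having arranged that the approximate pair $(f_t^k(\hat{A}_{t,k}),f_t^{k-1}(\hat{A}_{t,k-1}))$ is built from $(\hat{A}_t,\hat{A}_t')$ without consulting $\Theta$ except through the common reward channel, I will exhibit the conditional Markov chain $A^\star_k\to\big(R(\hat{A}_t),R(\hat{A}_t')\big)\to\big(R(f_t^k(\hat{A}_{t,k})),R(f_t^{k-1}(\hat{A}_{t,k-1}))\big)$ given $H^t$, and then the disintegrated data-processing inequality gives exactly the claimed inequality. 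The delicate point, and where I expect the real effort, is making the constructions behind \ref{prop:subprop_limit} and \ref{prop:subprop_no_more_information} compatible: the refinement must be fine enough that the approximate rewards converge to the true ones, yet the level-$k$/level-$(k-1)$ pair must remain a degradation, with respect to $A^\star_k$, of the two genuine samples' rewards. Establishing the conditional independence underlying the Markov chain is the technical heart, since the approximate actions formally entail fresh reward evaluations at $\Theta$; I expect to resolve this by expressing the approximate observations as post-processings of the $2$-TS data $(\hat{A}_t,R_t,\hat{A}_t',R_t')$ and invoking the Thompson property (conditional independence of $\hat{A}_t$ and $\Theta$ given $H^t$) to rule out any direct dependence on $A^\star_k$.
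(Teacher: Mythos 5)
Your treatment of \ref{prop:subprop_singleton} is fine, but your plan for the crux, \ref{prop:subprop_no_more_information}, rests on a Markov chain that does not exist, and no data-processing argument can be made to work here. The approximate observations $R(f_t^k(\hat{A}_{t,k}))$ and $R(f_t^{k-1}(\hat{A}_{t,k-1}))$ are \emph{fresh evaluations of the unknown reward function at new actions} --- new uses of the channel $\bP_{R|A,\Theta}$ --- not post-processings of the pair $\big(R(\hat{A}_t),R(\hat{A}_t')\big)$. Even granting your design that the perturbed \emph{actions} are a randomized function of $(\hat{A}_t,\hat{A}_t')$ alone, the \emph{rewards at those actions} still depend on $\Theta$ given $H^t$ and the genuine rewards, hence on $A^\star_k=\pi_k(\psi^\star(\Theta))$; the Thompson property you invoke (conditional independence of $\hat{A}_t$ and $\Theta$ given $H^t$) makes the sampled actions uninformative about $\Theta$, but says nothing about rewards evaluated at them. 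With noiseless rewards, for instance, querying a nearby point in the same cell can reveal strictly more about $\Theta$ than anything computable from the two observed rewards, so the kernel from $\big(R(\hat{A}_t),R(\hat{A}_t')\big)$ to the approximate rewards genuinely varies with $A^\star_k$ and the disintegrated DPI is unavailable. You have also misread the structure of \ref{prop:subprop_no_more_information}: both $\hat{A}_{t,k}$ and $\hat{A}_{t,k-1}$ are quantizations of the \emph{same} sample $\hat{A}_t$ (nested via $\pi_{k-1}=\pi_{k-1}'\circ\pi_k$), so you cannot allocate levels $k$ and $k-1$ to the two independent samples; and the same $f_t^k$ appears in two consecutive chain links ($k$ and $k+1$), so any construction must control two information quantities simultaneously, a constraint your proposal never confronts.

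The paper resolves exactly this obstruction by a completely different mechanism: an induction on $k$ in which $f_t^k$ takes at most \emph{two values per cell} $\cA_{K,i}$, chosen via Lemma~\ref{lemma:new_lemma} (a two-function selection lemma generalizing Lemma~2 of \citet{dong_information-theoretic_2020} to non-finite $\cA$) so that a single two-point mixture simultaneously does not increase \emph{either} of two interleaved information terms --- $\mi_t\big(A^\star_K;R(\cdot),R(f_t^{K-1}(\hat{A}_{t,K-1}))\big)$ and $\mi_t\big(A^\star_{K+1};R(\hat{A}_t),R(\cdot)\big)$, i.e.~equations \eqref{eq:split_requirement_1} and \eqref{eq:split_requirement_2} --- relative to the cell-conditional averages; chaining \eqref{eq:split_requirement_1} at level $k$ with \eqref{eq:split_requirement_2} at level $k-1$ yields \ref{prop:subprop_no_more_information}, with no DPI anywhere (the lemma exists precisely because none applies). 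Two further slips: your claim that \ref{prop:subprop_singleton} ``imposes no real constraint'' is wrong within this scheme, since the induction must be seeded by taking $f_t^{k_0}$ distributed as $\bP_{A^\star|H^t}$ to make \eqref{eq:split_requirement_2} hold at $k_0$; and for \ref{prop:subprop_limit}, pointwise convergence $R(f_t^k(A^\star_k))\to R(A^\star)$ plus dominated convergence is not justified for a merely subgaussian separable process (which need not be a.s.~continuous) --- the paper instead bounds the gap by $2\,\bE_t\big[\sup_{a\in\cA} R(\pi_k(a))-R(a)\big]$ and kills it with the separable-process maximal inequality as in \citep[Theorem 5.24]{van_handel_probability_2016}. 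Note finally that even if your resampling construction could be patched for \ref{prop:subprop_no_more_information}, its per-cell support is infinite, which would destroy the finite-matrix rank argument behind Proposition~\ref{prop:chain-link_information_ratio_bound_for_smooth_d_linear_bandits}; the two-point support is not incidental but load-bearing.
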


\begin{proof}
    The proof follows closely the proof of~\citep[Proposition 2]{dong_information-theoretic_2020} and is given in Appendix~\ref{app:existence_learning}.
\end{proof}

\subsection{Subgaussian process, smooth rewards and chain-link information ratio}
 
The motivation for using a chaining technique is our aim to derive a regret bound that could capture effectively the dependence between the rewards of nearby actions.
We conceptualize this dependence considering that the rewards are subgaussian with respect to the actions.

\begin{definition}[Subgaussian process]
\label{def:subgaussian_process}
A stochastic process $\lbrace R_a \rbrace_{a \in \cA}$ on the metric space $(\cA,\rho)$ is called \emph{subgaussian} if 
for all $a, b \in \cA$ and all $\lambda > 0$
\begin{equation*}
    \log \bE \bigg[ e^{\lambda(R_a - R_b)} \bigg] \leq \frac{\lambda^2 \rho(a,b)^2}{2}.
\end{equation*}
\end{definition}
Technically, for a process $\{ R_a \}_{a \in \ cA}$ to be subgaussian it is also required that $\bE[R_a] = 0$ for all $a \in \cA$, see, for example~\citep[Definition 5.20]{van_handel_probability_2016}. However, we do not require this restriction moving forward. One way to interpret the subgaussian process property is to understand it as an "in probability continuity" requirement. Actually, Definition \ref{def:subgaussian_process}, up to constant terms, can be equivalently written as 
\begin{align*}
    \bP[|R_a-R_b|\geq t]\leq 2 e^{-Ct^2 \rho(a,b)^2}
\end{align*}
for all $t\geq 0$ and all all $a, b \in \cA$, and for some $C > 0$.\\ 

Lastly, to ensure that the difference of regret between consecutive \emph{approximate learning} vanishes asymptotically, we can impose the following mild technical assumption.

\begin{definition}[Separable process]
\label{def:seperable_process}
A stochastic process $\lbrace R_a \rbrace_{a \in \cA}$ is called \emph{separable} if there is a countable set $\cA' \subseteq \cA$ such that, for all $a \in \cA$

\begin{equation*}
    R_a \in \lim_{\substack{a'\to a \\ a' \in \cA'}} R_{a'} \ \textnormal{ a.s.}
\end{equation*}

\end{definition}

We refer to rewards satisfying both definition \ref{def:subgaussian_process} and \ref{def:seperable_process} as \emph{smooth rewards} on the metric space $(\cA,\rho)$.

\begin{definition}[Smooth rewards]
    We say that the rewards are \emph{smooth on the metric space $(\cA,\rho)$}, if for all environment parameters $\theta \in \cO$, the random rewards $\{R(a,\theta)\}_{a\in \cA}$ form a separable subgaussian process on $(\cA,\rho)$.
\end{definition}

To control the difference of regret between successive \emph{approximate learning}, it is useful to introduce the concept of \emph{chain-link information ratio}. It is a direct adaptation of our chaining technique to the \emph{information ratio} introduced by~\cite{russo_information-theoretic_2015} and later used by~\cite{dong_information-theoretic_2020}. 

\begin{definition}[Chain-link information ratio]
For each time step $t\in \{1,\ldots,T\}$, and for each $k > k_0$, we define the \emph{chain-link information ratio} as
\begin{equation*}
\Gamma_{t,k} \coloneqq \frac{\bE_t \Big[\Big(R(f_t^k(A^\star_k))- R(f_t^k(\hat{A}_{t,k}))  \Big)-\Big(R(f_t^{k-1}(A^\star_{k-1}))-R(f_t^{k-1}(\hat{A}_{t,k-1})) \Big)\Big]^2}{\mi_t(f_t^k(A^\star_k),f_t^{k-1}(A^\star_{k-1});R(f_t^k(\hat{A}_{t,k})),R(f_t^{k-1}(\hat{A}_{t,k-1})))}
\end{equation*}
where $A^\star_k,A^\star_{k-1}$ and $\hat{A}_{t,k},\hat{A}_{t,k}$ are the $k^{\textnormal{th}}$ and $(k-1)^{\textnormal{th}}$ quantizations of $A^\star$ and $\hat{A}_t$ respectively and where the random functions $f_{t}^k$ and $f_t^{k-1}$ satisfy the conditions of Proposition \ref{prop:existence_learning},
\end{definition}
There is no particular interpretation of the chain-link information ratio. The purpose of its introduction is to unify elegantly specific results via problem-dependent upper bounds on $\Gamma_{t,k}$ similarly to what is done in prior works for the information ratio~\citep{russo_information-theoretic_2015, dong_information-theoretic_2020} and the lifted information ratio~\citep{neu_lifting_2022, gouverneur_thompson_2023}. 

\section{Main result}
\label{sec:main_results}

In this section, we leverage the previously introduced concepts to derive a general chained bound on the Two Steps Thompson Sampling regret for bandit problems with smooth rewards.  We obtain a bound that depends on the complexity of the action space. Remarkably, through the use of Lemma \ref{lemma:new_lemma}, our result hold for continuous action spaces. We note that \ref{lemma:new_lemma} could be applied to~\cite{dong_information-theoretic_2020} as a generalization of their~\cite[Lemma 1]{dong_information-theoretic_2020}, thus extending their results to infinite and continuous environment spaces.

\begin{theorem}[Chained bound]
\label{thm:main_theorem}
For bandit problems with smooth rewards on the metric space $(\cA,\rho)$, the \emph{2-TS} expected cumulative regret after $T$ steps is bounded as
\begin{align}
\REG^{\textnormal{2-TS}}_T \leq \sum_{k=k_0+1}^\infty \sqrt{ 2\cdot \bar{\Gamma}_k \cdot T  \cdot \ent(A^\star_k) },
       \nonumber
\end{align} 
where $A^\star_k$ is the $k^\textnormal{th}$-quantization about the optimal action $A^\star$ with respect to $(\cA,\rho)$ and where for each $k > k_0$, and $\bar{\Gamma}_k$ is a upper bound on $\bE[\Gamma_{t,k}]$. 
\end{theorem}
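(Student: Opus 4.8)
The plan is to begin from \Cref{def:bayesian_expected_regret} and write, via the tower rule, $\REG^{\textnormal{2-TS}}_T = \sum_{t=1}^T \bE\big[\bE_t[R(A^\star) - R(\hat{A}_t)]\big]$. The structural feature I would exploit is that 2-TS freezes its history on a pair of consecutive rounds, $H^{2s}=H^{2s-1}$, so the samples $\hat{A}_{2s-1}$ and $\hat{A}_{2s}$ are conditionally i.i.d.\ given $H^{2s-1}$. This identifies them with the identically distributed pair $\hat{A}_t,\hat{A}_t'$ appearing in \Cref{prop:existence_learning}\ref{prop:subprop_no_more_information}, and makes the two per-step regrets within a round equal in expectation. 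Hence I would rewrite the regret as $2\sum_{s=1}^{T/2}\bE\big[\bE_{2s-1}[R(A^\star)-R(\hat{A}_{2s-1})]\big]$ and expand each inner term through the telescoping identity of Requirement~\ref{requirement:chaining_sum} (guaranteed by \Cref{prop:existence_learning}\ref{prop:subprop_singleton}--\ref{prop:subprop_limit}) into $\sum_{k=k_0+1}^\infty \Delta_{2s-1,k}$, where $\Delta_{t,k}$ is the $k$-th chain-link regret difference. Interchanging the (absolutely convergent) sum over scales $k$ with the expectation and the finite sum over rounds $s$ then lets me treat each scale separately.

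For a fixed scale $k$, I would invoke the definition of the chain-link information ratio to write $\Delta_{t,k}^2 = \Gamma_{t,k}\cdot \mi_t(\cdots)$, where $\mi_t(\cdots)$ is the denominator of $\Gamma_{t,k}$, so that $\Delta_{t,k}\le|\Delta_{t,k}| = \sqrt{\Gamma_{t,k}}\,\sqrt{\mi_t(\cdots)}$. Two applications of Cauchy--Schwarz follow: first over the expectation, combined with $\bE[\Gamma_{2s-1,k}]\le\bar{\Gamma}_k$, giving $\bE[\Delta_{2s-1,k}]\le\sqrt{\bar{\Gamma}_k}\,\sqrt{\bE[\mi_{2s-1}(\cdots)]}$; then over the $T/2$ rounds, giving $\sum_{s}\bE[\Delta_{2s-1,k}]\le\sqrt{\bar{\Gamma}_k}\,\sqrt{T/2}\,\big(\sum_{s}\bE[\mi_{2s-1}(\cdots)]\big)^{1/2}$.

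It then remains to bound $\sum_{s=1}^{T/2}\bE[\mi_{2s-1}(\cdots)]$ by $\ent(A^\star_k)$. I would first apply the data-processing inequality: since $A^\star_{k-1}$ is a deterministic image of $A^\star_k$ (\Cref{def:kth_quantization}), the pair $\big(f_t^k(A^\star_k),f_t^{k-1}(A^\star_{k-1})\big)$ is, given $H^t$, a function of $A^\star_k$, so the denominator of $\Gamma_{t,k}$ is at most $\mi_t\big(A^\star_k;R(f_t^k(\hat{A}_{t,k})),R(f_t^{k-1}(\hat{A}_{t,k-1}))\big)$; \Cref{prop:existence_learning}\ref{prop:subprop_no_more_information} bounds this in turn by $\mi_{2s-1}\big(A^\star_k;R(\hat{A}_{2s-1}),R(\hat{A}_{2s})\big)$. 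Because the round-$s$ update appends exactly $(\hat{A}_{2s-1},R_{2s-1},\hat{A}_{2s},R_{2s})$ to $H^{2s-1}$, the chain rule of mutual information telescopes these per-round terms into $\mi(A^\star_k;H^{T+1})\le\ent(A^\star_k)$ --- the role of \Cref{lemma:new_lemma}, which makes this summation valid even when $\cA$ (and hence $A^\star$) is continuous, the quantization $A^\star_k$ being discrete with finite entropy. Substituting back, each scale contributes $2\cdot\sqrt{\bar{\Gamma}_k}\,\sqrt{T/2}\,\sqrt{\ent(A^\star_k)}=\sqrt{2\,\bar{\Gamma}_k\,T\,\ent(A^\star_k)}$; summing over $k$ gives the claim, with the factor $2$ from pairing rounds and the factor $1/\sqrt{2}$ from $\sqrt{T/2}$ combining into the constant $\sqrt{2}$.

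I expect the main obstacle to be the mutual-information bookkeeping of the final step: one must carefully use that the frozen history of 2-TS makes $\hat{A}_{2s-1},\hat{A}_{2s}$ conditionally i.i.d.\ (so that both the regret-pairing and \Cref{prop:existence_learning}\ref{prop:subprop_no_more_information} apply), that the data-processing reduction to $A^\star_k$ is legitimate in the disintegrated, conditional-on-$H^{2s-1}$ sense, and that the chain-rule telescoping together with $\mi\le\ent$ remains valid for a continuous $A^\star$ --- precisely the generalization encapsulated in \Cref{lemma:new_lemma}. Secondary care is needed to justify interchanging the infinite sum over scales with the expectations and round sums, for which the separability of the reward process (\Cref{def:seperable_process}) and \Cref{prop:existence_learning}\ref{prop:subprop_limit} supply the required convergence.
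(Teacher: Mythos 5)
Your proposal is correct and follows essentially the same route as the paper's own proof: pairing the frozen rounds of 2-TS (with $\hat{A}_{2s}$ playing the role of the i.i.d.\ copy $\hat{A}_t'$), the telescoping chaining decomposition from \Cref{prop:existence_learning}, Cauchy--Schwarz over the expectation (with $\bE[\Gamma_{t,k}]\le\bar{\Gamma}_k$) and over the $T/2$ rounds, data processing plus \Cref{prop:existence_learning}\ref{prop:subprop_no_more_information}, and finally the chain rule with $\mi(A^\star_k;H^T)\le\ent(A^\star_k)$. One minor correction: \Cref{lemma:new_lemma} does not enter the chain-rule/entropy step as you suggest --- it is used only inside the construction of the sampling functions $f_t^k$ in the proof of \Cref{prop:existence_learning} --- and the telescoping itself additionally requires the paper's explicit ``more data, more information'' step to pass from the rewards $R(\hat{A}_t),R(\hat{A}_{t+1})$ to the action--reward pairs appended to the history.
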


\begin{proof}
We show that
    \begin{align*}
        \REG^{\textnormal{2-TS}}_T &= \sum_{t=1}^T \bE[R(A^\star)-R(\hat{A}_t)]\\
        &\stackrel{(a)}{=} 2 \sum_{1\leq t\leq T, t \textnormal{ odd}} \bE[R(A^\star)-R(\hat{A}_t)]\\
        &= 2 \sum_{1\leq t\leq T, t \textnormal{ odd}} \bE\Big[\bE_t[R(A^\star)-R(\hat{A}_t)]\Big]\\
        &\stackrel{(b)}{=} 2 \sum_{1\leq t\leq T, t \textnormal{ odd}} \bE\Big[ \sum_{k=k_0+1}^{\infty} \bE_t \Big[\Big(R(f_t^k(A^\star_k))- R(f_t^k(\hat{A}_{t,k}))  \Big)\\
&\qquad\qquad\qquad\qquad\qquad\qquad-\Big(R(f_t^{k-1}(A^\star_{k-1}))-R(f_t^{k-1}(\hat{A}_{t,k-1})) \Big)\Big]\Big]\\
&\stackrel{(c)}{\leq} 2 \sum_{1\leq t\leq T, t \textnormal{ odd}} \sum_{k=k_0+1}^{\infty}\bE\Big[\sqrt{\Gamma_{t,k} \cdot \mi_t(A^\star_k,A^\star_{k-1};R(f_t^k(\hat{A}_{t,k})),R(f_t^{k-1}(\hat{A}_{t,k-1})))}\Big]\\
&\stackrel{(d)}{\leq} 2 \sum_{1\leq t\leq T, t \textnormal{ odd}} \sum_{k=k_0+1}^{\infty}\sqrt{\bE[\Gamma_{t,k}] \cdot \mi(A^\star_k;R(\hat{A}_t),R(\hat{A}_{t+1})|H^t)}\\
&\stackrel{(e)}{\leq} 2\sum_{k=k_0+1}^{\infty} \sqrt{ \frac{T}{2} \cdot \bar{\Gamma}_k \cdot \sum_{1\leq t\leq T, t \textnormal{ odd}}\mi(A^\star_k;R(\hat{A}_t),R(\hat{A}_{t+1}))|H^t)}\\
&\stackrel{(f)}{\leq} \sum_{k=k_0+1}^{\infty}  \sqrt{2\cdot \bar{\Gamma}_k\cdot T \cdot \sum_{1\leq t\leq T, t \textnormal{ odd}}\mi(A^\star_k;\hat{A}_t,R(\hat{A}_t),\hat{A}_{t+1},R(\hat{A}_{t+1}))|H^t)}\\
&\stackrel{(g)}{=} \sum_{k=k_0+1}^{\infty}  \sqrt{ 2 \cdot \bar{\Gamma}_k  \cdot T \cdot\mi(A^\star_k;H^T)}\\
&\stackrel{(h)}{\leq} \sum_{k=k_0+1}^{\infty} \sqrt{ 2 \cdot \bar{\Gamma}_k  \cdot T \cdot\ent(A^\star_k)}
    \end{align*}
    where (a) follows since the history of the of 2-TS is being updated every 2 time steps; (b) follows from the definition of the approximate learning; (c) follows from the definition of $\Gamma_{t,k}$ and the data-processing inequality; (d) follows from consecutively using the fact that $A^\star_{k-1}$ is completely determined by  $A^\star_{k}$, then using Proposition \ref{prop:existence_learning} \ref{prop:subprop_no_more_information}, and finally applying Jensen's inequality ; (e) follows from the definition of $\bar{\Gamma}_k$ and the application of the Cauchy-Schwartz inequality; (f) results  from the "more data, more information" property~\citep[Proposition~2.3.5]{yury_polyanskiy_information_2022}; (g) follows from the chain rule for mutual information; and (h) comes from~\citep[Proposition~2.4.4]{yury_polyanskiy_information_2022} and the fact that $\cA_k$ is a finite set.
\end{proof}

In the next section, we present application of \Cref{thm:main_theorem} to derive explicit regret bounds for particular settings of bandit problems with structure and show that our bound offers a tight regret rate for the linear bandit problem.

\section{Applications to linear bandit problems}
\label{sec:application}

In \emph{linear bandits} problems, each action is parameterized by a feature vector and the associated expected reward can be written as the inner product between the feature vector and the environment parameter. Mathematically, a $d$-dimensional linear bandit problem is a bandit problem with $\cA,\cO \subset \bR^d$ and such that for all $a\in \cA$ and all $\theta \in \cO$ we have \begin{align*}
    \bE[R(a,\theta)] = \innerproduct{a}{\theta},
\end{align*}
where the expectation is taken over the randomness of the reward function.\\

Using a similar analysis as~\citet{russo_information-theoretic_2015}, we can bound the chain-link information ratio in linear bandits via the dimension of the action space. The proof is given in Appendix~\ref{app:chain-link_information_ratio_bound_for_smooth_d_linear_bandits}.

\begin{restatable}{proposition}{ChainLink}
\label{prop:chain-link_information_ratio_bound_for_smooth_d_linear_bandits}
For $d$-dimensional linear bandit problems with smooth rewards on the metric space $(\cA,\rho)$, for each $t\in\{1,\ldots,T\}$, and each $k>k_0$, we have that
\begin{align*}
\Gamma_{t,k} \leq 2\cdot(6\cdot 2^{-k})^2\cdot d,
\end{align*} 
where $\Gamma_{t,k}$ is the $k^\textnormal{th}$-chain-link information ratio.
\end{restatable}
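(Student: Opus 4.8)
The plan is to mirror the information-ratio bound for linear bandits of \citet{russo_information-theoretic_2015}, replacing the regret/information pair by its chained analogue and carrying along the scale $2^{-k}$ that the nets introduce. Throughout, I write $L = \bE_t[\Theta]$ for the posterior mean, and abbreviate the optimal-side chain actions as $B_k = f_t^k(A^\star_k)$, $B_{k-1}=f_t^{k-1}(A^\star_{k-1})$ and the sampled-side ones as $\hat B_k = f_t^k(\hat A_{t,k})$, $\hat B_{k-1}=f_t^{k-1}(\hat A_{t,k-1})$. The first step is to rewrite the numerator of $\Gamma_{t,k}$. Using the linear structure $\bE_t[R(a)] = \innerproduct{a}{L}$ together with the defining property of (2-)TS --- namely that $\hat A_t$ is distributed as $A^\star$ conditionally on $H^t$ and is conditionally independent of $\Theta$ given $H^t$ --- the sampled-side rewards contribute only through $L$, whereas the optimal-side rewards remain correlated with $\Theta$. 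Collecting terms, the numerator collapses to $\big(\bE_t[\innerproduct{B_k - B_{k-1}}{\Theta - L}]\big)^2$.

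Next I would set up the rank argument. Decomposing over the finite support of $A^\star_k$ with weights $\alpha_a = \bP_t\{A^\star_k = a\}$, define the vectors $u_a = \bE_t[B_k - B_{k-1} \mid A^\star_k = a] \in \bR^d$ and $\delta_a = \bE_t[\Theta \mid A^\star_k = a] - L \in \bR^d$, and assemble the matrix $M$ with entries $M_{a,a'} = \sqrt{\alpha_a \alpha_{a'}}\,\innerproduct{u_a}{\delta_{a'}}$. Then the numerator equals $(\mathrm{Tr}\,M)^2$. Since $M = U^\top V$ for the $d\times|\cA_k|$ matrices $U,V$ with columns $\sqrt{\alpha_a}\,u_a$ and $\sqrt{\alpha_{a'}}\,\delta_{a'}$, we have $\mathrm{rank}(M)\le d$; Cauchy--Schwarz on the singular values ($|\mathrm{Tr}\,M|\le \sqrt{\mathrm{rank}(M)}\,\lVert M\rVert_F$) then yields $(\mathrm{Tr}\,M)^2 \le d\,\lVert M\rVert_F^2 = d\sum_{a,a'}\alpha_a\alpha_{a'}\innerproduct{u_a}{\delta_{a'}}^2$.

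It remains to lower-bound the denominator. The mutual information $\mi_t\big(B_k,B_{k-1};R(\hat B_k),R(\hat B_{k-1})\big)$ expands, via its disintegrated definition, into an average of $\kl$ terms between the conditional and marginal laws of the observed rewards. The subgaussian assumption controls each such term: the chain reward difference $R(\hat B_k)-R(\hat B_{k-1})$ has variance proxy at most $\rho(\hat B_k,\hat B_{k-1})^2$, and by the triangle inequality through $A^\star$ each chain link satisfies $\rho(\cdot,\cdot)\le 2^{-k}+2^{-(k-1)} = 3\cdot 2^{-k}$; accounting jointly for the optimal and sampled displacements produces the effective scale $6\cdot 2^{-k}$. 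A Pinsker-type bound for subgaussian laws, $\kl \ge (\text{mean shift})^2/\big(2(6\cdot 2^{-k})^2\big)$, then gives $\mi_t \ge \lVert M\rVert_F^2 / \big(2(6\cdot 2^{-k})^2\big)$. Dividing the numerator bound by this denominator bound yields exactly $\Gamma_{t,k} \le 2\cdot(6\cdot 2^{-k})^2\cdot d$.

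I expect two steps to carry the real difficulty. The first is justifying the factorization $\bE_t[\innerproduct{B_k-B_{k-1}}{\Theta-L}\mid A^\star_k=a] = \innerproduct{u_a}{\delta_a}$, which requires that the randomness of the crafted maps $f_t^k$ from Proposition~\ref{prop:existence_learning} be conditionally independent of $\Theta$ given $A^\star_k$ and $H^t$ (otherwise a within-cell cross term survives and must be dominated). The second, and genuinely technical, obstacle is the subgaussian lower bound on the \emph{pair}-valued mutual information: one must relate $\mi_t(B_k,B_{k-1};\,R(\hat B_k),R(\hat B_{k-1}))$ to the scalar squared mean-shifts $\innerproduct{u_a}{\delta_{a'}}^2$ with the \emph{correct} variance proxy, and it is precisely the conversion of the net distance $\rho\le 3\cdot 2^{-k}$ into the constant $(6\cdot 2^{-k})^2$ where the factors are won or lost.
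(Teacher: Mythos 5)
Your proposal follows essentially the same route as the paper's proof: a chained adaptation of \citet[Proposition 5]{russo_information-theoretic_2015}, writing the numerator as the trace of a matrix whose entries factor through $d$-dimensional inner products against posterior-mean differences of $\Theta$ (hence rank at most $d$), lower-bounding the denominator by $\lVert M\rVert_F^2/(2(6\cdot 2^{-k})^2)$ via data processing plus a Donsker--Varadhan subgaussian inequality, and concluding with $\mathrm{Tr}(M)\le\sqrt{\mathrm{rank}(M)}\,\lVert M\rVert_F$. The one structural difference is your indexing: you condition on the cells $A^\star_k=a$, whereas the paper builds $M^{k,t}$ over the \emph{realized} values of the pair $(f_t^k(A^\star_k),f_t^{k-1}(A^\star_{k-1}))$, i.e.\ the quadruplets exploiting the two-point per-cell support of $f_t^k$. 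This finer indexing is precisely what disposes of both difficulties you flag at the end: the conditional means $\Theta_{p,t}=\bE_t[\Theta\mid f_t^k(A^\star_k)=a_p,\,f_t^{k-1}(A^\star_{k-1})=a'_p]$ absorb any dependence of the $f$-randomness on $\Theta$, so no within-cell factorization needs to be justified; and the KL decomposition runs directly over realized sampled-side pairs, recovering the per-pair mean shifts. With your coarser matrix you can still pass to $\mi_t(A^\star_k;\cdot)$ by data processing (since $\pi_k(f_t^k(A^\star_k))=A^\star_k$), but the observable $R(f_t^k(\hat A_{t,k}))-R(f_t^{k-1}(\hat A_{t,k-1}))$ is then a mixture over the (independent) sampled pair, so Donsker--Varadhan only yields the averaged shifts $\langle\bar u,\delta_a\rangle$ with $\bar u=\sum_{a'}\alpha_{a'}u_{a'}$, which by Jensen is a strictly weaker quantity than $\lVert M\rVert_F^2$; recovering the per-pair shifts $\langle u_{a'},\delta_a\rangle$ requires the decomposition over realized pairs, exactly as the paper (and Russo--Van Roy, whose observation includes the played action) sets it up.

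There is also a genuine slip in your accounting of the scale $6\cdot 2^{-k}$. The variance proxy concerns only the sampled-side pair distance $\rho\big(f_t^k(\hat A_{t,k}),f_t^{k-1}(\hat A_{t,k-1})\big)$, and the correct bookkeeping is per level, not ``jointly for the optimal and sampled displacements'': by~\eqref{eq:proof_existence_equation_2}, $\rho(f_t^k(\hat A_{t,k}),\hat A_t)\le 2\cdot 2^{-k}$ (one $2^{-k}$ for the quantization error and one for the displacement of $f_t^k$ within the cell), likewise $2\cdot 2^{-(k-1)}$ at level $k-1$, and the triangle inequality through $\hat A_t$ gives $2\cdot 2^{-k}+2\cdot 2^{-(k-1)}=6\cdot 2^{-k}$; the identical argument applies separately on the optimal side. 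Your stated route (quantization-only distances summing to $3\cdot 2^{-k}$, then doubling across the optimal and sampled sides) lands on the right number by an invalid mechanism: as written it would leave the sampled-side proxy at $3\cdot 2^{-k}$, and there is no legitimate way for the optimal-side distance to enter the subgaussian proxy of the observed reward difference.
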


Combining Proposition \ref{prop:chain-link_information_ratio_bound_for_smooth_d_linear_bandits} and \Cref{thm:main_theorem} leads to the following bound on the \emph{2-TS} regret for linear bandit problems with smooth rewards.

\begin{theorem}[Smooth linear bandit]
\label{thm:smooth_linear_bandit}
For $d$-dimensional linear bandit problems with smooth rewards on the metric space $(\cA,\rho)$, the \emph{2-TS} expected cumulative regret after $T$ steps is bounded by
\begin{align}
\REG^{\textnormal{2-TS}}_T &\leq 12 \sum_{k=k_0+1}^\infty 2^{-k}\sqrt{ d \cdot T  \cdot \ent(A^\star_k) },
       \nonumber
\end{align} 
where $A^\star_k$ is the $k^\textnormal{th}$ quantization of the optimal action $A^\star$ with respect to the metric space $(\cA,\rho)$, as defined in Definition \ref{def:kth_quantization}. 
\end{theorem}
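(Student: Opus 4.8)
The plan is to obtain the claimed bound by plugging the chain-link information ratio estimate of Proposition~\ref{prop:chain-link_information_ratio_bound_for_smooth_d_linear_bandits} directly into the chained bound of \Cref{thm:main_theorem}. Since Proposition~\ref{prop:chain-link_information_ratio_bound_for_smooth_d_linear_bandits} gives $\Gamma_{t,k} \leq 2\cdot(6\cdot 2^{-k})^2\cdot d$ uniformly over $t\in\{1,\ldots,T\}$, the right-hand side---being a deterministic constant independent of $t$---is in particular an upper bound on $\bE[\Gamma_{t,k}]$. I would therefore take $\bar{\Gamma}_k = 2\cdot(6\cdot 2^{-k})^2\cdot d$ as the admissible choice of $\bar{\Gamma}_k$ required by \Cref{thm:main_theorem}.

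With this choice, each summand of the bound in \Cref{thm:main_theorem} becomes $\sqrt{2\cdot\bar{\Gamma}_k\cdot T\cdot\ent(A^\star_k)} = \sqrt{144\cdot 2^{-2k}\cdot d\cdot T\cdot\ent(A^\star_k)}$. The next step is simply to simplify: the numerical constant under the square root is $2\cdot 2\cdot 36 = 144$, whose root is $12$, while the factor $2^{-2k}$ contributes $2^{-k}$ after taking the square root. Pulling these out of the radical yields $12\cdot 2^{-k}\sqrt{d\cdot T\cdot\ent(A^\star_k)}$. Summing over $k$ from $k_0+1$ to $\infty$ then reproduces exactly $\REG^{\textnormal{2-TS}}_T \leq 12\sum_{k=k_0+1}^\infty 2^{-k}\sqrt{d\cdot T\cdot\ent(A^\star_k)}$.

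There is no genuine obstacle here: all the conceptual work has been carried out in \Cref{thm:main_theorem} and Proposition~\ref{prop:chain-link_information_ratio_bound_for_smooth_d_linear_bandits}, and the present step is a routine substitution followed by arithmetic simplification of the constant $\sqrt{2\cdot 2\cdot 36}=12$. The only point worth verifying explicitly is the admissibility of the chosen $\bar{\Gamma}_k$, namely that a pointwise---and in fact $t$-independent---upper bound on $\Gamma_{t,k}$ legitimately serves as the bound on $\bE[\Gamma_{t,k}]$ demanded by \Cref{thm:main_theorem}; this is immediate from the monotonicity of expectation.
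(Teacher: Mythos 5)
Your proposal is correct and takes essentially the same route as the paper, which likewise proves the theorem simply by substituting the bound of Proposition~\ref{prop:chain-link_information_ratio_bound_for_smooth_d_linear_bandits} as $\bar{\Gamma}_k = 2\cdot(6\cdot 2^{-k})^2\cdot d$ into \Cref{thm:main_theorem} and simplifying $\sqrt{2\cdot 2\cdot 36\cdot 2^{-2k}} = 12\cdot 2^{-k}$. Your explicit remark that a pointwise, $t$-independent bound on $\Gamma_{t,k}$ dominates $\bE[\Gamma_{t,k}]$ by monotonicity of expectation is exactly the (implicit) justification the paper relies on.
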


From \Cref{thm:smooth_linear_bandit}, we can derive a bound that depends on the \emph{entropy integral}. The proof follows the steps from~\citep[Corollary 5.25]{van_handel_probability_2016} and is given in Appendix~\ref{app:entropy_integral}.

\begin{corollary}[Entropy integral]
\label{cor:entropy_integral}
    For a linear bandit of dimension $d$, with smooth rewards on the metric space $(\cA,\rho)$, the \emph{2-TS} expected cumulative regret after $T$ steps is bounded as
\begin{align}
\REG^{\textnormal{2-TS}}_T
       &\leq 24 \sqrt{d \cdot T} \int_{0}^\infty \sqrt{ \log(|\cN(\cA,\rho,\varepsilon)|) } d\varepsilon,
       \nonumber
\end{align} 
where $\cN(\cA,\rho,\varepsilon)$ is the $\varepsilon$-net of smallest cardinality for $(\cA,\rho)$.
\end{corollary}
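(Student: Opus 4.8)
The plan is to start from the dyadic bound of \Cref{thm:smooth_linear_bandit} and recognize its right-hand side as a Riemann-type lower approximation of the Dudley entropy integral, following the discretization-to-integral comparison of~\citep[Corollary 5.25]{van_handel_probability_2016}. First I would control the entropy of each quantization by the metric complexity of $\cA$ at the corresponding scale: since $A^\star_k = \pi_k(A^\star)$ takes values in the finite $2^{-k}$-net $\cA_k$, choosing $\cA_k$ to be a minimal net gives $\ent(A^\star_k) \leq \log|\cA_k| = \log(|\cN(\cA,\rho,2^{-k})|)$ by the maximum-entropy bound for a random variable supported on a finite set. Substituting this into \Cref{thm:smooth_linear_bandit} and pulling the factor $\sqrt{d\cdot T}$ out of the sum yields
\begin{equation*}
\REG^{\textnormal{2-TS}}_T \leq 12\sqrt{d\cdot T}\sum_{k=k_0+1}^\infty 2^{-k}\sqrt{\log(|\cN(\cA,\rho,2^{-k})|)}.
\end{equation*}

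The heart of the argument is converting this dyadic sum into an integral. The key structural fact is that $\varepsilon \mapsto |\cN(\cA,\rho,\varepsilon)|$ is non-increasing, so for every $\varepsilon \in [2^{-(k+1)},2^{-k}]$ we have $|\cN(\cA,\rho,\varepsilon)| \geq |\cN(\cA,\rho,2^{-k})|$. Since this interval has length $2^{-(k+1)}$, integrating the lower bound gives
\begin{equation*}
2^{-k}\sqrt{\log(|\cN(\cA,\rho,2^{-k})|)} = 2\cdot 2^{-(k+1)}\sqrt{\log(|\cN(\cA,\rho,2^{-k})|)} \leq 2\int_{2^{-(k+1)}}^{2^{-k}}\sqrt{\log(|\cN(\cA,\rho,\varepsilon)|)}\,d\varepsilon.
\end{equation*}
The factor $2$ produced here is precisely what upgrades the constant $12$ to the claimed $24$.

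I would then sum over $k\geq k_0+1$. The intervals $[2^{-(k+1)},2^{-k}]$ are disjoint and their union is $(0,2^{-(k_0+1)}]$, so the summed integrals telescope into a single integral over $(0,2^{-(k_0+1)}]$, which I bound from above by $\int_0^\infty\sqrt{\log(|\cN(\cA,\rho,\varepsilon)|)}\,d\varepsilon$ since the integrand is non-negative. Collecting the constants gives $\REG^{\textnormal{2-TS}}_T \leq 24\sqrt{d\cdot T}\int_0^\infty\sqrt{\log(|\cN(\cA,\rho,\varepsilon)|)}\,d\varepsilon$, as desired. I expect the only genuinely delicate point to be the entropy-to-covering-number step: the nesting constraint $\pi_k = \pi_k'\circ\pi_{k+1}$ in \Cref{def:kth_quantization} means the $\cA_k$ need not simultaneously be minimal nets, so I would either invoke the existence of nested nets of size comparable to the covering numbers or absorb the discrepancy into the inequality $\ent(A^\star_k)\leq\log|\cA_k|$; everything else is a routine monotonicity-and-telescoping computation. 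Note also that the bound is only informative when the entropy integral converges, but as an upper bound it holds vacuously otherwise, and the integrand vanishes once $\varepsilon\geq\textnormal{diam}(\cA)$ so the effective range of integration is $(0,\textnormal{diam}(\cA)]$.
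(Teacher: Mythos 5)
Your proof is correct and takes essentially the same route as the paper's: bound $\ent(A^\star_k)$ by $\log(|\cN(\cA,\rho,2^{-k})|)$, use monotonicity of the covering number to dominate each dyadic term by twice the entropy integral over $[2^{-(k+1)},2^{-k}]$, and telescope the disjoint intervals into $\int_0^\infty$, upgrading the constant $12$ of \Cref{thm:smooth_linear_bandit} to $24$. You are in fact slightly more careful than the paper, which silently assumes the nested nets of \Cref{def:kth_quantization} can be taken with minimal cardinality at every scale, a point you explicitly flag.
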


For linear bandit problems where the possible actions lie in the unit ball, with the help of a covering argument, we come to the following result. The proof is given in Appendix~\ref{app:linear_smooth_bandit_unit_ball}.

\begin{restatable}{proposition}{LinearSmoothBanditUnitBall}
\label{prop:linear_smooth_bandit_unit_ball}
For $d$-dimensional linear bandits with smooth rewards with respect to $(\cA,||.||_2)$ and a ball-structured action space $\cA\subseteq \overline{\mathbf{B}_d(0,1)}$, where $\overline{\mathbf{B}_d(0,1)}$ is the $d$-dimensional closed Euclidean unit ball, the \emph{2-TS} expected cumulative regret is bounded as
\begin{align}
\REG^{\textnormal{2-TS}}
       &\leq 7\cdot d  \sqrt{T}.
       \nonumber
\end{align} 
\end{restatable}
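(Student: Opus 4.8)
The plan is to specialize the chained bound to the Euclidean ball by controlling the metric entropy of $\cA$ with a standard volumetric covering estimate, and then to read off the rate from \Cref{thm:smooth_linear_bandit} (equivalently \Cref{cor:entropy_integral}). The structural reason the rate sharpens from $d\sqrt{T\log T}$ to $d\sqrt{T}$ is already visible in the prefactor: for a ball one has $\sqrt{\log \cN(\cA,\|\cdot\|_2,\varepsilon)} \asymp \sqrt{d}\,\sqrt{\log(1/\varepsilon)}$, so the $\sqrt{dT}$ sitting in front combines with this extra $\sqrt{d}$ to give $d\sqrt{T}$, while the chaining replaces the single $\sqrt{\log T}$-scale term of a non-chained analysis by a sum over dyadic scales whose value is an absolute constant.

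Concretely, I would first invoke the classical bound $\cN(\overline{\mathbf{B}_d(0,1)},\|\cdot\|_2,\varepsilon) \le (1+2/\varepsilon)^d$, together with the observation that the center is a $1$-net, so that $\cN=1$ at scale $1$ and the coarsest quantizations carry no entropy. Since $\cA \subseteq \overline{\mathbf{B}_d(0,1)}$, every net of the ball is a net of $\cA$, hence $\ent(A^\star_k) \le \log \cN(\cA,\|\cdot\|_2,2^{-k}) \le d\log(1+2^{k+1})$. Substituting into \Cref{thm:smooth_linear_bandit}, I would pull the $\sqrt{d}$ out of $\sqrt{\ent(A^\star_k)}$, merge it with the $\sqrt{dT}$ prefactor to produce the $d\sqrt{T}$ scaling, and be left with the pure-number series $\sum_k 2^{-k}\sqrt{\log(1+2^{k+1})}$. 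This series I would bound by comparison with the entropy integral $\int_0^1 \sqrt{\log(3/\varepsilon)}\,d\varepsilon$ underlying \Cref{cor:entropy_integral}; the substitution $\varepsilon = 3e^{-u}$ converts it into the incomplete Gamma value $3\,\Gamma(3/2,\log 3)$, an explicit absolute constant. Collecting the numerical factors then yields a bound of the form $C\,d\sqrt{T}$.

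The main obstacle is driving the absolute constant $C$ down to $7$: a crude covering estimate combined with a term-by-term summation overshoots, so the delicate part is to be economical. I expect one must either use the sharp leading-order covering number of the Euclidean ball in place of the loose $(1+2/\varepsilon)^d$, or truncate the chain at a well-chosen scale $\varepsilon^\star$ and bound the residual regret directly via the subgaussian smoothness of the rewards, before carefully evaluating the remaining finite sum and tail integral. I would also verify the index $k_0$ for the ball --- whose diameter is $2$, giving $k_0 = -1$ --- so that the vanishing of the coarsest-scale terms is correctly credited, since this is precisely where an inattentive estimate leaks the constant.
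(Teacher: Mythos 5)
Your structural reduction is sound and matches the paper's skeleton---bound $\ent(A^\star_k)$ by $\log|\cN(\cA,\|\cdot\|_2,2^{-k})| \le d\log(1+2^{k+1})$ via Lemma~\ref{lemma:van_handel}, pull the $\sqrt{d}$ out, and sum a purely numerical series---and it does deliver the rate $O(d\sqrt{T})$. But the constant $7$ is out of reach along the route you describe, and you have correctly sensed this without identifying the actual fix. Plugging the covering bound into \Cref{thm:smooth_linear_bandit} (with your correct $k_0=-1$, so the sum starts at $k=0$) gives $12\, d\sqrt{T}\sum_{k\ge 0}2^{-k}\sqrt{\log(1+2^{k+1})}\approx 30\, d\sqrt{T}$, and your entropy-integral route via \Cref{cor:entropy_integral} gives $24\sqrt{dT}\cdot\sqrt{d}\int_0^1\sqrt{\log(3/\varepsilon)}\,d\varepsilon \approx 34\, d\sqrt{T}$. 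Neither of your two proposed remedies closes this factor of $4$--$5$: the covering number of the ball is at least $(1/\varepsilon)^d$ (the lower bound in Lemma~\ref{lemma:van_handel}), so even the sharpest possible net only reduces the integral from $\approx 1.41$ to $\Gamma(3/2)=\sqrt{\pi}/2\approx 0.89$, leaving a bound above $21\, d\sqrt{T}$; and truncating the chain cannot help because the loss is concentrated in the coarsest few scales, not in the tail.

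The paper's mechanism is different and requires reopening the chaining argument rather than quoting \Cref{thm:smooth_linear_bandit} as a black box. First, the dyadic ratio is itself a tunable parameter: re-running the construction with $\alpha^{-k}$-nets and the generic link bound $\Gamma_{t,k}\le 2\rho_k^2 d$ noted at the end of the proof of \Cref{prop:chain-link_information_ratio_bound_for_smooth_d_linear_bandits} (where $\rho_k \le 2\alpha^{-k}+2\alpha^{-(k-1)}$ bounds the link diameter, in place of the dyadic constant $6\cdot 2^{-k}$ baked into the factor $12$) yields
\begin{equation*}
\REG^{\textnormal{2-TS}}_T \le 2\, d\sqrt{T}\sum_{k=k_0+1}^{\infty}\rho_k\sqrt{2\log\bigl(2\alpha^{k}+1\bigr)},
\end{equation*}
and a large ratio makes the series collapse onto essentially its first term. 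Second---and this is exactly the spot you flagged as ``where an inattentive estimate leaks the constant,'' though crediting $k_0=-1$ is not the cure---the paper anchors the chain at the ball's center: taking $\cA_{k_0}=\{0_d\}$, which is a $1$-net of the unit ball (so $k_0=0$) and whose reward $R(0_d)$ is independent of $\Theta$, gives the improved first-link bound $\rho_{k_0+1}\le 1$ in place of $2(1+\alpha^{-1})$, roughly halving the dominant term. With $\alpha=20$ the resulting numerical series evaluates to $\approx 6.27$, which rounds up to $7$. So the gap in your proposal is a missing idea rather than a slip: without the $\alpha$-net re-parameterization and the center-anchored first link, no amount of economy in the dyadic computation reaches the stated constant.
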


The remarkable property of the above bound is that it is the first information-theoretic bound on the regret of an algorithm for linear bandits problem that only depends on the dimension $d$ and the square root of the total number of steps $T$. It improves on the bound $O(d\sqrt{T \log(T)})$ from~\citet[Theorem 2]{dong_information-theoretic_2020} and matches the the minimax lower bound $\Omega(d\sqrt{T})$ proven by~\citet[Theorem 3]{dani_stochastic_2008} thus suggesting that Two Steps Thompson Sampling is optimal in this context.

\section{Conclusion}
\label{sec:conclusion}
In this paper, we studied bandit problems with rewards that exhibit some continuity property with respect to the action space. We have introduced a variation of the Thompson Sampling algorithm, which we named the Two Step Thompson Sampling. The sole difference between this algorithm and the original Thompson Sampling algorithm is that the history is updated every two time steps. In \Cref{thm:main_theorem}, we have demonstrated using a chaining argument that the Two Steps Thompson Sampling cumulative expected regret is bounded from above by a measure of the complexity of the action space. For $d$-dimensional linear bandit problems where the rewards form a subgaussian process with respect to the action space, we obtain a tight regret rate $O(d\sqrt{T})$ that improves upon the best information-theoretic bounds and matches with the minimax lower bound $\Omega(d \sqrt{T})$~\citep{dani_stochastic_2008}. Our results raise the question whether it is possible to obtain such bounds for the original Thompson Sampling algorithm regret, either via adapting our proof techniques or by relating it to the Two Steps Thompson Sampling regret. One could imagine analyzing separately the regrets of the odd time steps than those of the even time steps and try to apply techniques as in our paper.  
Future work also include extending our results to generalized linear bandits and logistic bandit problems.

\bibliography{references.bib,references_isit.bib}
\bibliographystyle{IEEEtranN}

\appendix

\crefalias{section}{appendix} 
\newpage
\section{Additional Lemmata}

\begin{lemma}
\label{lemma:new_lemma}
    Consider a space $\cA$, two functions $f: \cA \to \bR_+$ and $g: \cA \to \bR_+$, and a probability distribution $\bQ$ on $\cA$. Then, there exists a pair $(a_1, a_2) \in \cA^2$ and a $q \in [0,1]$ such that
    \begin{equation*}
        \label{eq:lemma_smaller}
        q f(a_1) + (1-q) f(a_2) \leq \int_{a\in \cA} f(a) \mathrm{d} \bQ(a) \quad \textnormal{and} \quad q g(a_1) + (1-q) g(a_2) \leq \int_{a\in \cA} g(a) \mathrm{d} \bQ(a).
    \end{equation*}
\end{lemma}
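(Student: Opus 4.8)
The plan is to lift the problem into the plane and argue geometrically about the barycenter of a pushforward measure. Define $\Phi : \cA \to \bR^2_{\ge 0}$ by $\Phi(a) = (f(a), g(a))$ and let $\mu = \Phi_{*}\bQ$ be the pushforward of $\bQ$. Writing $\bar f = \int_\cA f \, \mathrm{d}\bQ$ and $\bar g = \int_\cA g \, \mathrm{d}\bQ$, the point $m = (\bar f, \bar g)$ is exactly the mean (barycenter) of $\mu$, and hence lies in the convex hull of the image $\Phi(\cA)$. In these terms the claim becomes: there exist two image points $\Phi(a_1), \Phi(a_2)$ and a weight $q \in [0,1]$ whose convex combination is dominated by $m$ coordinatewise, i.e.\ lies in the lower-left quadrant $Q_m = (-\infty,\bar f\,] \times (-\infty,\bar g\,]$.

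I would first dispose of the easy case: if some single image point $\Phi(a^\star)$ already lies in $Q_m$, then taking $a_1 = a_2 = a^\star$ and any $q$ settles both inequalities at once. In the remaining case no single image point is dominated by $m$. Here I would invoke Carathéodory's theorem in $\bR^2$ to write $m$ as a convex combination of at most three image points $\Phi(a_1), \Phi(a_2), \Phi(a_3)$, which form a (possibly degenerate) triangle containing $m$. The key geometric step is then an edge-descent argument: starting at $m$ and moving along the ray in direction $(-1,-1)$, both coordinates only decrease, so every point of the ray lies in $Q_m$; since $m$ is inside a bounded triangle, the ray first leaves the triangle through one of its edges at some point $p$. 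This $p$ satisfies $p \le m$ and, lying on an edge, is a convex combination $p = q\,\Phi(a_i) + (1-q)\,\Phi(a_j)$ of two of the three vertices. Reading off $a_1 = a_i$, $a_2 = a_j$ and this $q$ yields the desired pair.

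The step I expect to be the main obstacle is ensuring that the two points produced are genuine members of $\cA$ rather than abstract points of the closed convex hull, since for a continuous or unbounded $\cA$ the barycenter $m$ only lies in the \emph{closed} convex hull of $\Phi(\cA)$ and Carathéodory nominally returns points of that closure. I would handle this by exploiting that we need only $p \le m$, not $p = m$: every extreme point of $\mathrm{conv}(\Phi(\cA))$ already belongs to $\Phi(\cA)$, so whenever the relevant exit edge is a genuine facet its endpoints are true image points, whereas a strictly curved lower-left frontier forces the exit point to be a single image point dominated by $m$, collapsing into the easy case. A short approximation argument covers the borderline situation where $m$ sits on the boundary of the closed hull, and the degenerate cases ($f$ or $g$ being $\bQ$-a.e.\ constant, collinear images, or $m$ coinciding with a vertex) are checked directly. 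I would keep the final write-up organized around the two cases above, since the geometric heart of the argument -- the descent in direction $(-1,-1)$ -- is short, and essentially all the care goes into the measure-theoretic realization of the vertices.
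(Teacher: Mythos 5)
Your route is genuinely different from the paper's and, once one step is tightened, it works. The paper proves the lemma by contradiction: it splits $\cA$ into $\cA_f = \{a : f(a) \leq \bar{F}\}$ and $\cA_g = \{a : g(a) \leq \bar{G}\}$, reduces the two inequalities for a pair $(a_1,a_2) \in \cA_f \times \cA_g$ to a single pairwise inequality on $f(a_1),f(a_2),g(a_1),g(a_2)$, assumes it fails for \emph{every} pair, integrates the negation over $\cA_f \times \cA_g$ with Fubini, and lands on the impossible inequality $F^- G^- > F^+ G^+$. That argument is purely integral-based and self-contained. Your argument instead lifts to the plane via $\Phi = (f,g)$, places the mean $m = (\bar{F}, \bar{G})$ in the convex hull of the image, applies Carath\'eodory to get a triangle of image points containing $m$, and descends along $(-1,-1)$ to exit through an edge, whose two endpoints and barycentric weight give $(a_1, a_2, q)$. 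What your approach buys: it explains \emph{why} two points suffice (dimension $2$ gives three Carath\'eodory points, and the monotone descent direction drops one), and it generalizes immediately to $n$ functions, where the same picture in $\bR^n$ yields $n$ points via exit through a facet --- something not visible in the paper's pairwise-contradiction computation. What the paper's approach buys: no convex-geometric machinery at all, and no need to worry about whether hull points are realized by actual elements of $\cA$.

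That realization issue is the one place where your sketch as written is shaky, though the gap is closable by a standard fact rather than by your proposed workaround. Your claim that every extreme point of $\mathrm{conv}(\Phi(\cA))$ lies in $\Phi(\cA)$ is true, but the exit edge you construct sits on the boundary of the \emph{closed} hull, whose endpoints need not be extreme points of $\mathrm{conv}(\Phi(\cA))$ nor image points (they can be limits of image points), and the ``curved frontier forces the easy case'' and ``short approximation argument'' steps are not actually supplied. Replace all of this with the classical lemma: if a probability measure on $\bR^d$ has a finite mean $m$, then $m$ lies in the convex hull (not merely the closed hull) of any set of full measure --- in particular $m \in \mathrm{conv}(\Phi(\cA))$ up to a $\bQ$-null set, which suffices. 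The proof is a short induction on dimension: if $m \notin \mathrm{conv}(\Phi(\cA))$, properly separate to get $u$ with $\langle u, \Phi(a)\rangle \leq \langle u, m\rangle$ for all $a$; since the expectation equals $\langle u, m \rangle$, equality holds $\bQ$-a.s., so the measure concentrates on the hyperplane through $m$ and one recurses in dimension $d-1$, the base case putting $m$ in the image itself. With this substitution Carath\'eodory returns three genuine image points and your descent argument goes through verbatim (degenerate triangles and the case where some integral is infinite, making the inequalities trivial since $f, g \geq 0$, are checked directly, as you note).
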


\begin{proof}
    The proof is inspired by to the one from~\citet[Lemma 2]{dong_information-theoretic_2020}. Although, it contains key modifications that allow this version of the lemma to work for general spaces $\cA$ that are not necessarily finite.

    Let $\bar{F} = \int_{a \in \cA}  f(a) \mathrm{d} \bQ(a)$ and $\bar{G} = \int_{a \in \cA} g(a) \mathrm{d} \bQ(a)$. Now, consider the spaces $\cA_f \coloneqq \{ a \in \cA: f(a) \leq \bar{F} \}$ and $\cA_g \coloneqq \{ a \in \cA : g(a) \leq \bar{G} \}$. If $\cA_f \cap \cA_g \neq \emptyset$, then taking both $a_1$ and $a_2$ from $\cA_f \cap \cA_g$  trivially satisfies the conditions for all $q \in [0,1]$. Therefore, let us assume that the sets are disjoint for the rest of the proof. 

    Consider some $a_1 \in \cA_f = \cA_g^c$ and some $a_2 \in \cA_g = \cA_f^c$. The required condition from the lemma can be re-written as 
    \begin{equation*}
        q \geq \frac{f(a_2) - \bar{F}}{f(a_2) - f(a_1)} \quad \textnormal{ and } \quad q \leq \frac{\bar{G} - g(a_2)}{g(a_1) - g(a_2)},
    \end{equation*}
    where the first inequality took into account that $f(a_1) < f(a_2)$ by the definition of the sets $\cA_f$ and $\cA_g = \cA_f^c$. This inequality can, in turn, be written as
    \begin{equation*}
        \frac{f(a_2) - \bar{F}}{f(a_2) - f(a_1)} \leq \frac{\bar{G} - g(a_2)}{g(a_1) - g(a_2)} 
    \end{equation*}
    which is equivalent to
    \begin{equation*}
        f(a_2) g(a_1) - \bar{F} \big( g(a_1) - g(a_2) \big) \leq \bar{G} \big( f(a_2) - f(a_1) \big) + f(a_1) g(a_2).
    \end{equation*}

    At this point, we have all the ingredients to proof the statement by contradiction. Assume that there is no pair $(a_1, a_2) \in \cA_f \times \cA_g$ such that the condition holds, then it must be that
    \begin{equation*}
    \label{eq:contradiction_equation}
        f(a_2) g(a_1) - \bar{F} \big( g(a_1) - g(a_2) \big) > \bar{G} \big( f(a_2) - f(a_1) \big) + f(a_1) g(a_2)
    \end{equation*}
    for every pair $(a_1, a_2) \in \cA_f \times \cA_g$. Therefore, we can integrate over all such pairs and the inequality should still hold, namely
    \begin{align}
        \int_{\cA_f} \int_{\cA_g} \bigg[ &f(a_2) g(a_1) - \bar{F} \big( g(a_1) - g(a_2) \big) \bigg] \mathrm{d} \bQ(a_1) \mathrm{d} \bQ(a_2) \nonumber \\ 
        &>  \int_{\cA_f} \int_{\cA_g} \bigg[ \bar{G} \big( f(a_2) - f(a_1) \big) + f(a_1) g(a_2) \bigg] \mathrm{d} \bQ(a_1) \mathrm{d} \bQ(a_2).
        \label{eq:contradiction}
    \end{align}
    To show that~\eqref{eq:contradiction} cannot happen, we need to introduce some notation. Let $F^- \coloneqq \int_{\cA_f} f(a) \mathrm{d} \bQ(a)$ and $F^+ \coloneqq \int_{\cA_g} f(a) \mathrm{d} \bQ(a)$ and note that $F^+ + F^- = \bar{F}$. Similarly, $G^- \coloneqq \int_{\cA_g} g(a) \mathrm{d} \bQ(a)$ and $F^+ \coloneqq \int_{\cA_f} g(a) \mathrm{d} \bQ(a)$ and $G^+ +G^- = \bar{G}$. Using this notation, we can use Fubini's theorem in~\eqref{eq:contradiction} and re-write it as
    \begin{equation*}
        F^+G^+ - (F^+ + F^-)(G^+ - G^-) > (G^+ + G^-) (F^+ - F^-) + F^- G^-,
    \end{equation*}
    which can be simplified to
    \begin{equation*}
        F^- G^- > F^+ G^+
    \end{equation*}
    and which is impossible by the definition of $F^-$, $F^+$, $G^+$ and $G^-$, completing the contradiction and therefore the proof.
\end{proof}

\begin{lemma}[{\cite[Lemma 5.13]{van_handel_probability_2016}}]
\label{lemma:van_handel}

Let $\overline{\mathbf{B}_d(0,1)}$ denote the $d$-dimensional closed Euclidean unit ball. We have $|\cN(\overline{\mathbf{B}_d(0,1)},||\cdot||_2,\varepsilon) =1 $ for $\varepsilon\geq 1$ and 
\begin{align*}
    &\bigg(\frac{1}{\varepsilon}\bigg)^d\leq|\cN(\overline{\mathbf{B}_d(0,1)},||\cdot||_2,\varepsilon)|\leq  \bigg(1+\frac{2}{\varepsilon}\bigg)^d &&\textnormal{for   } 0<\varepsilon<1.
\end{align*}

\end{lemma}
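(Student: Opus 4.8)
The plan is a standard volume-comparison argument, handled in three regimes. Throughout I write $\mathbf{B}_d(x,r)$ for the closed Euclidean ball of radius $r$ centered at $x$, and recall that its Lebesgue volume equals $\omega_d r^d$, where $\omega_d \coloneqq \mathrm{vol}(\overline{\mathbf{B}_d(0,1)})$. The crucial structural feature I will exploit is that $\omega_d$ cancels in every volume ratio, leaving only clean powers of the relevant radii; this is what makes the bounds dimension-free apart from the exponent $d$. The easy regime $\varepsilon \geq 1$ is immediate: the single-point net $\cN = \{0\}$ covers $\overline{\mathbf{B}_d(0,1)}$, since every point of the unit ball lies within distance $1 \leq \varepsilon$ of the origin, so the covering number is exactly $1$.

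For the lower bound in the regime $0 < \varepsilon < 1$, I would argue as follows. If $\cN$ is any $\varepsilon$-net, then by definition the balls $\{\mathbf{B}_d(c,\varepsilon)\}_{c \in \cN}$ cover $\overline{\mathbf{B}_d(0,1)}$. Comparing volumes gives $|\cN|\,\omega_d \varepsilon^d \geq \omega_d$, hence $|\cN| \geq \varepsilon^{-d}$, and taking the infimum over all $\varepsilon$-nets yields $|\cN(\overline{\mathbf{B}_d(0,1)},\|\cdot\|_2,\varepsilon)| \geq (1/\varepsilon)^d$.

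For the upper bound, the plan is the maximal-packing trick. I would select a maximal $\varepsilon$-separated subset $\cP \subseteq \overline{\mathbf{B}_d(0,1)}$, whose existence follows from compactness (Assumption~\ref{ass:compact_action_set} / Zorn's lemma). Maximality forces $\cP$ to be an $\varepsilon$-net: any point at distance more than $\varepsilon$ from all of $\cP$ could be adjoined without violating separation, contradicting maximality. On the other hand, the balls $\{\mathbf{B}_d(c,\varepsilon/2)\}_{c \in \cP}$ have pairwise-disjoint interiors (by $\varepsilon$-separation) and, by the triangle inequality, are all contained in $\mathbf{B}_d(0,1+\varepsilon/2)$. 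A volume comparison then gives $|\cP|\,\omega_d (\varepsilon/2)^d \leq \omega_d (1+\varepsilon/2)^d$, so $|\cP| \leq (1+2/\varepsilon)^d$. Since $\cP$ is itself an $\varepsilon$-net, the covering number is at most $|\cP|$, which establishes the upper bound.

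The only step requiring mild care — and the closest thing to an obstacle — is bookkeeping the enlargement radius correctly. Because the net centers are drawn from $\cA = \overline{\mathbf{B}_d(0,1)}$, the packing balls of radius $\varepsilon/2$ remain inside $\mathbf{B}_d(0,1+\varepsilon/2)$ rather than a larger ball, and it is precisely this radius $1 + \varepsilon/2$, divided by $\varepsilon/2$, that produces the constant $2$ in the factor $(1+2/\varepsilon)^d$. I would verify this containment explicitly via the triangle inequality, after which the remaining argument is routine volume accounting with no genuine difficulty.
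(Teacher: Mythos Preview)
Your argument is correct and is exactly the standard volume-comparison proof. Note, however, that the paper does not supply its own proof of this lemma: it is quoted verbatim from \cite[Lemma~5.13]{van_handel_probability_2016} and used as a black box, so there is no paper proof to compare against. Your write-up reproduces the proof found in that reference (maximal packing for the upper bound, covering volume for the lower bound), so nothing further is needed.
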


\section{Proofs}

\subsection{Proof of Proposition~\ref{prop:existence_learning}}
\label{app:existence_learning}

For each time step $t\in\{1,\ldots,T\}$, we will construct the sequence of function $\{f_t^k\}_{k=k_0}^{\infty}$ by induction and instead of constructing a sequence that satisfies directly \ref{prop:subprop_no_more_information}, we will design it such that for each $k > k_0$, it satisfies simultaneously the two following equations: 
\begin{align}\mi_t\big(A^\star_k;R(f_t^{k}(\hat{A}_{t,k})),R(f_t^{k-1}(\hat{A}_{t,k-1}))\Big) &\leq \mi_t\big(A^\star_k;R(\hat{A}_t),R(f_t^{k-1}(\hat{A}_{t,k-1}))\big)  \textnormal{ and}\label{eq:split_requirement_1} \\
\mi_t\big(A^\star_{k+1};R(\hat{A}_t),R(f_t^{k}(\hat{A}_{t,k}))\big) &\leq
\mi_t\big(A^\star_{k+1};R(\hat{A}_t),R(\hat{A}_t')\big),\label{eq:split_requirement_2}
\end{align}
thus ensuring that $f_t^k$ satisfies \ref{prop:subprop_no_more_information}.

First, we start by showing that there exists a function $f^{k_0}_t$ that satisfies requirement \ref{prop:subprop_singleton} and equation \eqref{eq:split_requirement_2}. By definition of $k_0$, we have that the cardinality of $\cA_{k_0}$ is $1$, that is $\cA_{k_0}= \{a_0\}$ for some $a_0 \in \cA$ and, as $A^\star_{k_0}\in \cA_{k_0}$ and $\hat{A}_{t,k_0}\in \cA_{k_0}$, we have $A^\star_{k_0}=\hat{A}_{t,k_0}=a_0$, thus satisfying requirement \ref{prop:subprop_singleton}. Setting the random function $f_t^{k_0}$ to have the same conditional probability distribution as $\bP_{A^\star|H^t}$ ensures equation \eqref{eq:split_requirement_2} is satisfied.

Now, we assume that for each $k \in \{ k_0,\ldots,K-1 \}$, we have constructed a function $f_{t}^k$ that satisfied \eqref{eq:split_requirement_1} and \eqref{eq:split_requirement_2}. We then want to show that we can construct a random function $f_t^{K}$ that also satisfies \eqref{eq:split_requirement_1} and \eqref{eq:split_requirement_2}.\\

First, for each  $a_{K,i}\in\cA_K$ with $i \in \{ 1,\ldots,|\cA_K|\}$, we define  $\cA_{K,i} = \{a\in\cA:\pi_K(a)=a_{K,i}\}$ as the set of actions in $\cA$ that are mapped to $a_{K,i}$ by the mapping $\pi_K$ associated to $\cA_K$, that is formally.
In this way, for each $a_{K,i}\in\cA_K$, we can write 
\begin{align*}             &\mi_t\big(A^\star_K;R(\hat{A}_t), R(f_t^{K-1}(\hat{A}_{t,K}))|\hat{A}_{t} \in \cA_{K,i}\big)\\
    &=\sum_{a \in \cA_{K,i}} \bP_t[\hat{A}_t =a |\hat{A}_{t} \in \cA_{K,i} ]\mi_t\big(A^\star_K;R(a), R(f_t^{K-1}(\hat{A}_{t,K}))|\hat{A}_{t} \in \cA_{K,i}\big)\\
    &=\sum_{a \in \cA_{K,i}} \bP_t[\hat{A}_t =a |\hat{A}_{t} \in \cA_{K,i} ]\mi_t\big(A^\star_K;R(a), R(f_t^{K-1}(\hat{A}_{t,K}))\big)
\end{align*}
and 
\begin{align*}
&\mi_t\big(A^\star_{K+1};R(\hat{A}_t), R(\hat{A}_t')|\hat{A}_{t} \in \cA_{K,i}\big)\\
    &=\sum_{a \in \cA_{K,i}} \bP_t[\hat{A}_t =a |\hat{A}_{t} \in \cA_{K,i} ]\mi_t\big(A^\star_{K+1};R(a), R(\hat{A}_t')|\hat{A}_{t} \in \cA_{K,i}\big)\\
    &=\sum_{a \in \cA_{K,i}} \bP_t[\hat{A}_t =a |\hat{A}_{t} \in \cA_{K,i} ]\mi_t\big(A^\star_{K+1};R(a), R(\hat{A}_t')\big),
\end{align*}
where we used the fact that $A^\star_K$ and $A^\star_{K+1}$ are independent of $\hat{A}_t$ when conditioned on $H^t$. \\

Applying Lemma~\ref{lemma:new_lemma}, for each step $t\in\{1,\ldots,T\}$ and each $a_{K,i}\in \cA_K$, there exist two actions $a_{K,i}^{t,1},a_{K,i}^{t,2} \in \cA_{K,i}$ and a value $p_{K,i}^t \in [0,1]$, such that: 
\begin{align*}
   \mi_t\big(A^\star_K;&R(\hat{A}_t), R(f_t^{K-1}(\hat{A}_{t,K}))|\hat{A}_{t} \in \cA_{K,i}\big)  \\ &\geq p_{K,i}^t \mi_t(A^\star_K;R(a_{K,i}^{t,1}),R(f_t^{K-1}(\hat{A}_{t,K}))\big)
    +(1-p_{K,i}^t) \mi_t(A^\star_K;R(a_{K,i}^{t,2}),R(f_t^{K-1}(\hat{A}_{t,K}))\big)
\end{align*}
and 
\begin{align*}
    \mi_t\big(A^\star_{K+1};&R(\hat{A}_t), R(\hat{A}_t')|\hat{A}_{t} \in \cA_{K,i}\big) & \\
    &\geq p_{K,i}^t \mi_t(A^\star_K;R(a_{K,i}^{t,1}),R(\hat{A}_t')\big)
    +(1-p_{K,i}^t) \mi_t(A^\star_K;R(a_{K,i}^{t,2}),R(\hat{A}_t')\big).
\end{align*}
For $a\in \cA_{K,i}$, we define the random function $f_t^K(a)$ such that it outputs  $a_{K,i}^{t,1}\in \cA_{K,i}$ with probability $p_{K,i}^t$ and $a_{K,i}^{t,2}\in \cA_{K,i}$ with probability $1-p_{K,i}^t$. We observe that for $a\in \cA_{K,i}$, $\pi_K(a) =  \pi_k(f_t^K(a)) = a_{K,i}$ as both $a$ and $f_t^K(a)$ belong to $\cA_{K,i}$. Then, the distance $\rho(a,f_t^k(a))$ is bounded by $2^{-K}$.   We repeat this procedure for all $a_{K,i}\in \cA_K$ and their corresponding $\cA_{K,i}$ to define $f_t^K(a)$ for all $a\in \cA$ and it holds by that, for all $a\in \cA$, $\rho(f_t^K(a),a)\leq 2^{-K}$. \\
We can verify that 
\begin{align*}
    &\mi_t\big(A^\star_K;R(f_t^{K}(\hat{A}_{t,K})),R(f_t^{K-1}(\hat{A}_{t,K-1}))\big)\\
    &=\sum_{a_{K,i}\in\cA_K} \sum_{j=1,2} \bP_t[f_t^{K}(\hat{A}_{t,K})) = a_{K,i}^{t,j}|\hat{A}_t\in \cA_{K,i}] \cdot \bP_t[\hat{A}_t\in \cA_{K,i}]\cdot\mi_t\big(A^\star_K;R(a_{K,i}^{t,j}),R(f_t^{K-1}(\hat{A}_{t,K-1}))\big)\\
    &= \sum_{a_{K,i}\in\cA_K} \bP_t[\hat{A}_t\in \cA_{K,i}] (p_{K,i}^t \cdot \mi_t\big(A^\star_K;R(a_{K,i}^{t,1}),R(f_t^{K-1}(\hat{A}_{t,K-1}))\big)\\
    &\qquad+(1-p_{K,i}^t) \cdot \mi_t\big(A^\star_K;R(a_{K,i}^{t,2}),R(f_t^{K-1}(\hat{A}_{t,K-1}))\big)\\
    &\leq \sum_{a_{K,i}\in\cA_K} \bP_t[\hat{A}_t\in \cA_{K,i}] \mi_t\big(A^\star_K;R(\hat{A}_t),R(f_t^{K-1}(\hat{A}_{t,K-1}))|\hat{A}_t\in \cA_{K,i}\big)\\
    &= \mi_t\big(A^\star_K;R(\hat{A}_t),R(f_t^{K-1}(\hat{A}_{t,K-1}))\big)
\end{align*}
and similarly that 
\begin{align*}
    &\mi_t\big(A^\star_{K+1};R(f_t^{K}(\hat{A}_{t,K})),R(\hat{A}_{t}')\big)\\
    &=\sum_{a_{K,i}\in\cA_K} \sum_{j=1,2} \bP_t[f_t^{K}(\hat{A}_{t,K})) = a_{K,i}^{t,j}|\hat{A}_t\in \cA_{K,i}] \cdot \bP_t[\hat{A}_t\in \cA_{K,i}]\cdot\mi_t\big(A^\star_{K+1};R(a_{K,i}^{t,j}),R(\hat{A}_t')\big)\\
    &= \sum_{a_{K,i}\in\cA_K} \bP_t[\hat{A}_t\in \cA_{K,i}] (p_{K,i}^t \cdot \mi_t\big(A^\star_{K+1}R(a_{K,i}^{t,1}),R(\hat{A}_t')\big) +(1-p_{K,i}^t) \cdot \mi_t\big(A^\star_{K+1};R(a_{K,i}^{t,2}),R(\hat{A}_t')\big)\\
    &\leq \sum_{a_{K,i}\in\cA_K} \bP_t[\hat{A}_t\in \cA_{K,i}] \mi_t\big(A^\star_{K+1};R(\hat{A}_t),R(\hat{A}_t')|\hat{A}_t\in \cA_{K,i}\big)\\
    &= \mi_t\big(A^\star_{K+1};R(\hat{A}_t),R(\hat{A}_t')\big)
\end{align*}
where the inequalities follow from the construction of $f_t^K$. Thus $f_t^K$ satisfies requirement \ref{prop:subprop_no_more_information}. As the result holds already for $k=k_0$, by induction, we extend this result for all $k\geq k_0$.\\

We note that by construction, for each step $t\in\{1,\ldots,T\}$ and for each $k\geq k_0$, we have that
\begin{align}
\rho(f_t^k(A^\star_k),A^\star)\leq \rho(f_t^k(A^\star_k),A^\star_k)+\rho(A^\star_k,A^\star)\leq2\cdot 2^{-k}, \label{eq:proof_existence_equation_1}\\
\rho(f_t^k(\hat{A}_{t,k}),\hat{A}_t)\leq \rho(f_t^k(\hat{A}_{t,k}),\hat{A}_{t,k})+\rho(\hat{A}_{t,k},\hat{A}_t)\leq2\cdot 2^{-k},\label{eq:proof_existence_equation_2}
\end{align}
where we use the triangle inequality together with the definition of $f_t^k$ and of $A^\star_k$ and $\hat{A}_{t,k}$.\\

Lastly, we have to verify that at each period $t\in\{1,\ldots,T\}$, the regret of the ``\emph{approximate learning}" asymptotically converges to the regret of Two Steps Thompson Sampling regret for finer approximations.\\

Using the fact that by construction of $f_t^{k}$, we have for all $a\in \cA_{k}$ that $\pi_{k}(f_t^{k}(a))=a$ and that by definition $A^\star_{k} = \pi_{k}(A^\star)$, we can write: 
\begin{align*}
        \bE_t [R(f_t^{k}(A^\star_{k}))-R(A^\star)] &= \bE_t [R(f_t^{k}(A^\star_{k}))-R(A^\star_{k})] + \bE_t [R(A^\star_{k})-R(A^\star)]\\
        &=  \bE_t [R(f_t^{k}(A^\star_{k}))-R(\pi_{k}(f_t^{k}(A^\star_{k})))] + \bE_t [R(\pi_{k}(A^\star))-R(A^\star)]\\
        &\leq 2\cdot \bE_t[\sup_{a\in \cA} R(\pi_k(a))-R(a)].
\end{align*}
Since the process is separable, by using the same argument as in the proof of~\cite[Theorem 5.24]{van_handel_probability_2016}, we have that 
\begin{align*}
    \lim_{k\to\infty} \bE_t[\sup_{a\in \cA} R(\pi_k(a))-R(a)] =0, 
\end{align*}
and therefore
\begin{align*}
    \lim_{k\to\infty} \bE_t [R(f_t^{k}(A^\star_{k}))] = \bE_t[R(A^\star)].
\end{align*}

A similar analysis can be applied to $ \bE_t [R(f_t^{k}(\hat{A}_{t,k}))-R(\hat{A}_{t})]$ and leads to 
\begin{align*}
    \lim_{k\to \infty} \bE_t [R(f_t^{k}(A^\star_{k})) - R(f_t^{k}(\hat{A}_{t,k}))] = \bE_t [R(A^\star) - R(\hat{A}_t)].
\end{align*}

\subsection{Proof of Proposition~\ref{prop:chain-link_information_ratio_bound_for_smooth_d_linear_bandits}}
\label{app:chain-link_information_ratio_bound_for_smooth_d_linear_bandits}

We start the proof by recalling the definition of $\Gamma_{t,k}$ as
\begin{equation*}
\Gamma_{t,k} = \frac{\bE_t \Big[\Big(R(f_t^k(A^\star_k))- R(f_t^k(\hat{A}_{t,k}))  \Big)-\Big(R(f_t^{k-1}(A^\star_{k-1}))-R(f_t^{k-1}(\hat{A}_{t,k-1})) \Big)\Big]^2}{\mi_t(f_t^k(A^\star_k),f_t^{k-1}(A^\star_{k-1});R(f_t^k(\hat{A}_{t,k})),R(f_t^{k-1}(\hat{A}_{t,k-1})))}
\end{equation*}
where $A^\star_k$ and $\hat{A}_{t,k}$ are the $k^{\textnormal{th}}$-quantizations respectively of the optimal action $A^\star$ and the sampled action $\hat{A}_t$. We recall from the proof of Proposition~\ref{prop:existence_learning} that the definition of $f_t^k(A)$ implies that for all $a_{k,m} \in \cA_k$ there exist a pair of actions $a_{k,m}^{t,1},a_{k,m}^{t,2} \in \cA_{k,m}$ such that 
\begin{align*}
    \bP_t[f_t^k(A) = a_{k,m}^{t,1}|A\in \cA_{k,m} ] = p^t_{k,m}, \quad  \bP_t[f_t^k(A) = a_{k,m}^{t,2}|A\in \cA_{k,m} ] = 1-p^t_{k,m}.
\end{align*}

For the sake of brevity, we define the notation 
\begin{align*}
    \bQ_t[a_{k-1,m},a_{k,l},i,i']\coloneqq 
    &\bP_t[f_t^{k-1}(A^\star_{k-1}) = a_{k-1,m}^{t,i}|A^\star_{k-1} \in \cA_{k,m}]  \\
    &\cdot \bP_t[f_t^{k}(A^\star_{k}) = a_{k,l}^{t,i'}|A^\star_{k} \in \cA_{k,l}]  \\
    &\cdot \bP_t[A^\star_{k} \in \cA_{k,l},A^\star_{k-1} \in \cA_{k-1,m}]
\end{align*}
and use the notation $\{(a_{k-1,\delta_n},a_{k,\gamma_n},i_{\mu_n},i_{\nu_n}')\}_{n=1}^{N_k}$ to represent the sequence of all quadruplets $\{a_{k-1},a_{k},i,i'\}$ such that $a_{k-1}\in \cA_{k-1},a_{k} \in \cA_{k}, i\in \{1,2\},i'\in\{1,2\} \textnormal{  and } \pi_{k-1}(a_{k})=a_{k-1}$, where $N_k$ is the number of such quadruplets.\\ 

We will first focus on $$\bE_t \left[ \big( R( f_t^k(A^\star_k) ) - R( f_t^{k-1}(A^\star_{k-1}) ) \big) - \big( R ( f_t^k(\hat{A}_{t,k}) ) - R ( f_t^{k-1}(\hat{A}_{t,k-1}) ) \big) \right]$$ and note that we can relate it to the trace of a random matrix. Indeed, using the previously introduced notations, we can write this expectation as
\begin{align*}
    &\sum_{n=1}^{N_k} \bQ_t[a_{k-1,\delta_n},a_{k,\gamma_n},i_{\mu_n},i_{\nu_n}'] \\
    &\cdot \big(\bE_t [ R( a^{t,i_{\mu_n}}_{k,\gamma_n} )  - R( a^{t,i_{\nu_n}'}_{k-1,\delta_n} )|f_t^{k}(A^\star_{k}) = a^{t,i_{\mu_n}}_{k,\gamma_n},f_t^{k-1}(A^\star_{k-1}) = a^{t,i_{\nu_n}'}_{k-1,\delta_n}  ] - \bE_t [ R( a^{t,i_{\mu_n}}_{k,\gamma_n} )  - R( a^{t,i_{\nu_n}'}_{k-1,\delta_n} )] \big).
\end{align*}

Therefore, for any round $t \in \{1,\ldots,T\}$, conditioned on the history $\hat{H}^t$, we can define a random matrix $M^{k,t} \in \bR^{N_k \times N_k}$ by specifying the entry $M^{k,t}_{p,q}$ to be equal to
\begin{align*}
    &\sqrt{\bQ_t[a_{k-1,\delta_p},a_{k,\gamma_p},i_{\mu_p},i_{\nu_p}']} \sqrt{\bQ_t[a_{k-1,\delta_q},a_{k,\gamma_q},i_{\mu_q},i_{\nu_q}']}\\
    &\big( \bE_t [ R( a^{t,i_{\mu_q}}_{k,\gamma_q} )  - R( a^{t,i_{\nu_q}'}_{k-1,\delta_q} )|f_t^{k}(A^\star_{k}) = a^{t,i_{\mu_p}}_{k,\gamma_p},f_t^{k-1}(A^\star_{k-1}) = a^{t,i_{\nu_p}'}_{k-1,\delta_p} ] - \bE_t [ R( a^{t,i_{\mu_q}}_{k,\gamma_q} )  - R( a^{t,i_{\nu_q}'}_{k-1,\delta_q} )] \big)
\end{align*}
for all $p,q = 1, \ldots, N_k$. In this way, the trace of the matrix $M^{k,t}$ is equal to the desired expectation, namely $$\mathrm{Tr}(M^{k,t}) = \bE_t \left[ ( R( f_t^k(A^\star_k) ) - R( f_t^{k-1}(A^\star_{k-1}) ) ) - ( R ( f_t^k(\hat{A}_{t,k}) ) - R ( f_t^{k-1}(\hat{A}_{t,k-1}) ) ) \right].$$

Here, we can note that $R( f_t^k(A^\star_k))  - R( f_t^{k-1}(A^\star_{k-1}) )$ is $(6\cdot2^{-k})^2$-sub-Gaussian. Indeed, by construction, of $f_t^k(A^\star_k)$ and $f_t^{k-1}(A^\star_{k-1})$, we had showed in \eqref{eq:proof_existence_equation_1} that $\rho(f_t^k(A^\star_k),A^\star) \leq 2\cdot 2^{-k}$ and $\rho(f_t^{k-1}(A^\star_{k-1}),A^\star) \leq 2\cdot 2^{-(k-1)}$. Then, by using the triangle inequality, we have that
\begin{align*}
    \rho(f_t^k(A^\star_k),f_t^{k-1}(A^\star_{k-1}))\leq \rho(f_t^k(A^\star_k),A^\star)+\rho(A^\star,f_t^{k-1}(A^\star_{k-1})) \leq 2\cdot 2^{-k} + 2\cdot 2^{-(k-1)} = 6\cdot 2^{-k}.
\end{align*}
Similarly, we can show that $R( f_t^k(\hat{A}_{t,k}))  - R( f_t^{k-1}(\hat{A}_{t,k-1}) )$ is also $(6\cdot2^{-k})^2$-sub-Gaussian.\\

In the same fashion as in~\cite[Proposition 5]{russo_information-theoretic_2015}, we relate the mutual information $$\mi_t(f_t^k(A^\star_k), f_t^{k-1}(A^\star_{k-1}); R(f_t^k(\hat{A}_{t,k})), R(f_t^{k-1}(\hat{A}_{t,k-1})))$$ to the squared Frobenius norm of $M^{k,t}$ as: 
\begin{align*}
    &\mi_t(f_t^k(A^\star_k), f_t^{k-1}(A^\star_{k-1}); R(f_t^k(\hat{A}_{t,k})), R(f_t^{k-1}(\hat{A}_{t,k-1})))\\
    &\geq\mi_t(f_t^k(A^\star_k), f_t^{k-1}(A^\star_{k-1}); R(f_t^k(\hat{A}_{t,k}))- R(f_t^{k-1}(\hat{A}_{t,k-1})))\\
    &=\sum_{p=1}^{N^k}\sum_{q=1}^{N^k} \bQ_t[a_{k-1,\delta_p},a_{k,\gamma_p},i_{\mu_p},i_{\nu_p}']\bQ_t[a_{k-1,\delta_q},a_{k,\gamma_q},i_{\mu_q},i_{\nu_q}'] \\
    &\cdot \kl(\bP_{R( a^{t,i_{\mu_q}}_{k,\gamma_q} )  - R( a^{t,i_{\nu_q}'}_{k-1,\delta_q} )|\hat{H}^t,f_t^k(A^\star_k) = a^{t,i_{\mu_p}}_{k,\gamma_p}, f_t^{k-1}(A^\star_{k-1}) = a^{t,i_{\nu_p}'}_{k-1,\delta_p} }||\bP_{R( a^{t,i_{\mu_q}}_{k,\gamma_q} )  - R( a^{t,i_{\nu_q}'}_{k-1,\delta_q} )|\hat{H}^t} ) \\
    & \geq \sum_{p=1}^{N^k}\sum_{q=1}^{N^k} \bQ_t[a_{k-1,\delta_p},a_{k,\gamma_p},i_{\mu_p},i_{\nu_p}']\bQ_t[a_{k-1,\delta_q},a_{k,\gamma_q},i_{\mu_q},i_{\nu_q}']\cdot \frac{1}{2\cdot(6\cdot2^{-k})^2}   \\
    &\cdot\big( \bE_t [ R( a^{t,i_{\mu_q}}_{k,\gamma_q} )  - R( a^{t,i_{\nu_q}'}_{k-1,\delta_q} )|f_t^k(A^\star_k) = a^{t,i_{\mu_p}}_{k,\gamma_p}, f_t^{k-1}(A^\star_{k-1}) = a^{t,i_{\nu_p}'}_{k-1,\delta_p} ] - \bE_t [ R( a^{t,i_{\mu_q}}_{k,\gamma_q} )  - R( a^{t,i_{\nu_q}'}_{k-1,\delta_q} )] \big)^2\\
    & = \frac{1}{2(6\cdot2^{-k})^2} ||M^{k,t}||^2_F 
\end{align*}
where the last inequality is obtained again using the Donsker--Varadhan inequality~\cite[Theorem~5.2.1]{gray_entropy_2013} as in~\cite[Lemma 3]{russo_information-theoretic_2015}.

Combining the last two equations and using the inequality $\textnormal{trace}(M) \leq \sqrt{\textnormal{rank}(M)} ||M||_F$ ~\cite[Fact~10]{russo_information-theoretic_2015}, it comes that
\begin{align*}
\Gamma_{t,k}\leq 2 (6\cdot2^{-k})^2 \frac{\textnormal{Trace}(M^{k,t})^2}{||M^{k,t}||_{F}^2}\leq 2 (6\cdot2^{-k})^2 \cdot \textnormal{rank}(M^{k,t}) \textnormal{ a.s.}.
\end{align*}
We conclude the proof by showing that the rank of the matrix $M^{k,t}$ is upper bounded by $d$.

For the sake of brevity, we define $\Theta_t \coloneqq \bE_t[\Theta]$ and for $n=1,\ldots,N_k$, we define $\bQ_{n,t} = \bQ_t[a_{k-1,\delta_n},a_{k,\gamma_n},i_{\mu_n},i_{\nu_n}']$ and $\Theta_{n,t} = \bE_t[\Theta|f_t^k(A^\star_k) = a^{t,i_{\mu_n}}_{k,\gamma_n}, f_t^{k-1}(A^\star_{k-1}) = a^{t,i_{\nu_n}'}_{k-1,\delta_n}]$. \\

We then have 
\begin{align*}
    \bE_t \left[ R( a^{t,i_{\mu_q}}_{k,\gamma_q} )  - R( a^{t,i_{\nu_q}'}_{k-1,\delta_q} ) \right] = \bE_t \left[ \innerproduct{a^{t,i_{\mu_q}}_{k,\gamma_q} }{\Theta}-\innerproduct{a^{t,i_{\nu_q}'}_{k-1,\delta_q}}{\Theta} \right] = \innerproduct{a^{t,i_{\mu_q}}_{k,\gamma_q} -a^{t,i_{\nu_q}'}_{k-1,\delta_q}}{\Theta_t} 
\end{align*}
and 
\begin{align*}
    &\bE_t \left[ R( a^{t,i_{\mu_q}}_{k,\gamma_q} )  - R( a^{t,i_{\nu_q}'}_{k-1,\delta_q} ) |f_t^k(A^\star_k) = a^{t,i_{\mu_p}}_{k,\gamma_p}, f_t^{k-1}(A^\star_{k-1}) = a^{t,i_{\nu_p}'}_{k-1,\delta_p} \right] \\
    &= \bE_t \left[ \innerproduct{a^{t,i_{\mu_q}}_{k,\gamma_q} }{\Theta}-\innerproduct{a^{t,i_{\nu_q}'}_{k-1,\delta_q}}{\Theta}|f_t^k(A^\star_k) = a^{t,i_{\mu_p}}_{k,\gamma_p}, f_t^{k-1}(A^\star_{k-1}) = a^{t,i_{\nu_p}'}_{k-1,\delta_p} \right] \\
    &= \innerproduct{a^{t,i_{\mu_q}}_{k,\gamma_q} -a^{t,i_{\nu_q}'}_{k-1,\delta_q}}{\Theta_{p,t}} 
\end{align*}
Since the inner product is linear, we can rewrite each entry $M^{k,t}_{p,q}$ of the matrix $M^{k,t}$ as
\begin{align*}
    \sqrt{\bQ_{p,t} \bQ_{q,t}}\innerproduct{a^{t,i_{\mu_q}}_{k,\gamma_q} -a^{t,i_{\nu_q}'}_{k-1,\delta_q}}{\Theta_{p,t}-\Theta_t} .
\end{align*}
Equivalently, the matrix $M^{k,t}$ can be written as
\begin{align*}
    \begin{bmatrix}
    \sqrt{\bQ_{1,t}}(\Theta_{1,t}-\Theta_t)\\
    \vdots\\
    \sqrt{\bQ_{N_k,t}}(\Theta_{N_k,t}-\Theta_t)
    \end{bmatrix}
    \begin{bmatrix}
    \sqrt{\bQ_{1,t}}\big(a^{t,i_{\mu_1}}_{k,\gamma_1} - a^{t,i_{\nu_1}'}_{k-1,\delta_1}\big)&
    \cdots &
    \sqrt{\bQ_{N_k,t}}\big(a^{t,i_{\mu_{N_k}}}_{k,\gamma_{N_k}} - a^{t,i_{\nu_{N_k}}'}_{k-1,\delta_{N_k}}\big)
    \end{bmatrix}.
\end{align*}
This rewriting highlights that $M^{k,t}$ can be written as the product of a $N_k$ by $d$ matrix and a $d$ by $N_k$ matrix and therefore has a rank lower or equal than $\min(d,N_k)$.\\

For completeness, we can write that the chain-link information ratio is upper bounded by $\Gamma_{t,k}\leq 2\cdot\rho_k^2\cdot d$ where $\rho_k$ is an upper bound on $\rho(f_t^k(A^\star_k),f_t^{k-1}(A^\star_{k-1}))$. This remark will be of use in the proof of Proposition \ref{prop:linear_smooth_bandit_unit_ball}.

\subsection{Proof of Corollary~\ref{cor:entropy_integral}}
\label{app:entropy_integral}

Bounding the entropy of $A^\star_k$ by the cardinality of set $\cA_k$, we have that
\begin{align*}
    \sum_{k = k_0 + 1}^\infty 2^{-k}\sqrt{\ent(A^\star_k)} \leq \sum_{k = k_0 + 1}^\infty 2^{-k} \sqrt{\log(|\cN(\cA,\rho, 2^{-k})|)}.
\end{align*}

By definition of the $\varepsilon$-net,  $|\cN(\cA,\rho, \varepsilon)|$ is decreasing in $\varepsilon$. It then comes that
\begin{align*}
    \sum_{k = k_0 + 1}^\infty 2^{-k} \sqrt{\log(|\cN(\cA,\rho, 2^{-k})|)} &= 2 \sum_{k = k_0 + 1}^\infty \int_{2^{-k}}^{2^{-k-1}} \sqrt{\log(|\cN(\cA,\rho, 2^{-k})|)} \, d\varepsilon \\
&\leq 2 \sum_{k = k_0 + 1}^\infty \int_{2^{-k}}^{2^{-k-1}} \sqrt{\log(|\cN(\cA,\rho, \varepsilon)|)} \, d\varepsilon \\
&= 2 \int_{0}^{\mathrm{diam}(\cA)} \sqrt{\log(|\cN(\cA,\rho, \varepsilon)|)} \, d\varepsilon.\\
&= 2 \int_{0}^{\infty} \sqrt{\log(|\cN(\cA,\rho, \varepsilon)|)} \, d\varepsilon,
\end{align*}
where the last equality comes from the fact that $\cN(\cA, \rho, \varepsilon)$ is a singleton for every $\varepsilon > \mathrm{diam}(\cA)$.

Using this fact together with \Cref{thm:main_theorem} yields the desired result.

\subsection{Proof of Proposition~\ref{prop:linear_smooth_bandit_unit_ball}}
\label{app:linear_smooth_bandit_unit_ball}

In the end of proof of Proposition \ref{prop:chain-link_information_ratio_bound_for_smooth_d_linear_bandits}, we have showed that the chain-link information ratio was in general bounded by $\Gamma_{t,k}\leq 2\cdot\rho_k^2\cdot d$ where $\rho_k$ is an upper bound on $\rho(f_t^k(A^\star_k),f_t^{k-1}(A^\star_{k-1}))$ and proved that by definition of the quantizations $A^\star_k$ and the sampling functions $f_t^k$, it holds that 
    $$\rho(f_t^k(A^\star_k),f_t^{k-1}(A^\star_{k-1})) \leq 2\cdot 2^{-k} + 2\cdot 2^{-(k-1)}.$$

    We can reflect that the choice of using $2^{-k}$-nets to define our sequence of quantizations $\{A^\star_k\}_{k=k_0+1}^{\infty}$ was arbitrary. In general, we could have considered a $\alpha^{-k}$-net for some $\alpha>1$. Adapting the bound on $\rho_k$ and to that reflection, leads to the following bound: 
    $$\rho(f_t^k(A^\star_k),f_t^{k-1}(A^\star_{k-1})) \leq 2\cdot \alpha^{-k} + 2\cdot \alpha^{-(k-1)}.$$\\
    Combining this result with \Cref{thm:main_theorem}, we get that
    \begin{align*}
        \REG^{\textnormal{2-TS}}_T &\leq 2 \sum_{k=k_0+1}^\infty \sqrt{2\cdot\rho_k^2\cdot d \cdot T  \cdot \log(|\cN(\cA,\rho,\alpha^{-k})|) },
    \end{align*}
    where we upper bounded the entropy of $A^\star_k$ by the logarithm of the  cardinality of the set $\cA_k$.\\
    
    Applying Lemma~\ref{lemma:van_handel} to upper bound the cardinality of the smallest $\alpha^{-k}$-net $\cN(\cA,\rho,\alpha^{-k})$ and rearanging the terms, we get the following bound: 
    \begin{align*}
       \REG^{\textnormal{2-TS}}_T &\leq 2\cdot d\cdot \sqrt{T} \sum_{k=k_0+1}^\infty \sqrt{2\cdot\rho_k^2  \cdot \log(2\cdot \alpha^k + 1) }.
    \end{align*}

    Now, we note that for linear bandit problems, we can define the first quantization set $\cA_{k_0}$ to be the center of the ball, that is $\cA_{k_0}=\{0_d\}$ where $0_d$ is the $d$-dimensional zero and chose $f_t^{k_0}(0_d)= 0_d$. It is easy to verify that this choice satisfies Proposition \ref{prop:existence_learning} \ref{prop:subprop_singleton} as $A^\star_{k_0} = \hat{A}_{t,k_0}= 0_d$ and $f_t^{k_0}(A^\star_{k_0})=f_t^{k_0}(\hat{A}_{t,k_0})=0_d$, as well as fulfills \ref{eq:split_requirement_2} as $R(f_t^{k_0}(\hat{A}_{t,k_0}))=R(0_d)$ does not depend on the $\Theta$ and therefore is independent of $A^\star$ and $A^\star_{k_0+1}$.\\
    
    Observing that in the unit ball, by definition the radius is $1$, we first note that $\cA_{k_0}$ is a $(\alpha^0)$-net for $\cA$, implying $k_0=0$ and secondly that $\rho(f_t^{k_0+1}(A^\star_{k_0+1}),f_t^{k_0}(A^\star_{k_0})) = \rho(f_t^{k_0+1}(A^\star_{k_0+1}),0_d)\leq 1$ and therefore we can use $\rho_{k_0+1}=1$ which is a better upper bound than $2\cdot \alpha^{-(k_0+1)} + 2\cdot \alpha^{-k_0} = 2\cdot(1+ \alpha^{-1})$.  \\

    Applying those results, we obtain the following bound: 
     \begin{align*}
       \REG^{\textnormal{2-TS}}_T &\leq d \sqrt{T} \cdot2\cdot\bigg(\sqrt{2\cdot \log(2 \alpha + 1) }+\sum_{k=2}^\infty (2\cdot \alpha^{-k} + 2\cdot \alpha^{-(k-1)})\sqrt{2\cdot\log(2 \alpha^k + 1) }\bigg).
    \end{align*}

    For instance, choosing $\alpha = 20$, we have that
    $$2\cdot\bigg(\sqrt{2\cdot \log(2 \alpha + 1) }+\sum_{k=2}^\infty (2\cdot \alpha^{-k} + 2\cdot \alpha^{-(k-1)})\sqrt{2\cdot\log(2 \alpha^k + 1) }\bigg) \approx 6.27.$$ 
    Finally, rounding up this value leads to claimed result.
\end{document}